\newtheorem{theorem}{Theorem}[section]
\newtheorem{corollary}[theorem]{Corollary}
\newcommand*{\Scale}[2][4]{\scalebox{#1}{$#2$}}%
\newcommand{\nopar}{{\parfillskip=0pt \par}}
\title[Propagate \& Distill: Towards Effective Graph Learners Using Propagation-Embracing MLPs]{Propagate \& Distill: Towards Effective Graph Learners Using Propagation-Embracing MLPs}
\author[Shin et al.]{%
Yong-Min Shin\\
Yonsei Univsersity\\
\email{jordan3414@yonsei.ac.kr}\And
Won-Yong Shin\thanks{Corresponding Author.}\\
Yonsei Univsersity\\
\email{wy.shin@yonsei.ac.kr}
}
\begin{document}

\maketitle
 
\begin{abstract}
Recent studies attempted to utilize multilayer perceptrons (MLPs) to solve semi-supervised node classification on graphs, by training a student MLP by knowledge distillation from a teacher graph neural network (GNN). While previous studies have focused mostly on training the student MLP by matching the output probability distributions between the teacher and student models during distillation, it has not been systematically studied how to inject the structural information in an \textit{explicit} and \textit{interpretable} manner. Inspired by GNNs that separate feature transformation $T$ and propagation $\Pi$, we re-frame the distillation process as making the student MLP learn both $T$ and $\Pi$. Although this can be achieved by applying the inverse propagation $\Pi^{-1}$ before distillation from the teacher, it still comes with a high computational cost from large matrix multiplications during training. To solve this problem, we propose {\bf Propagate \& Distill (P\&D)}, which propagates the output of the teacher before distillation, which can be interpreted as an approximate process of the inverse propagation. We demonstrate that {\bf P\&D} can readily improve the performance of the student MLP.
\end{abstract}

\section{Introduction}
\label{section:introduction}
Although message passing built upon the graph structure is crucial to the graph neural network (GNN)'s performance~\cite{xu2019gin, Morris2019WL}, it is known that this also causes a slow inference time~\citep{yan2020tinygnn, zhang2022glnn}, which sets a constraint on various real-world applications of GNNs, in particular where fast inference time is essential (\textit{e.g.,} web recommendation~\citep{hao2020pcompanion, zhang2020agl}). Very recently, GLNN~\citep{zhang2022glnn} proposed to replace GNNs with multiplayer perceptrons (MLPs), while training the student MLP with knowledge distillation (KD)~\citep{hinton2015kd} from a GNN teacher. By this approach, it was shown that the inference time has become over $\times$100 faster, while resulting in a satisfactory performance of the student MLP in semi-supervised node classification (SSNC). However, in such a scenario of GNN-to-MLP KDs, the student MLP is unable to utilize the structural information of graphs as input, resulting in a large information gap between the teacher GNN and the student MLP. In light of this, the main objective of GNN-to-MLP KDs is to enable the weights of the student MLP to \textit{learn} the graph structure so that, the student MLP achieves the prediction accuracy on par with the teacher GNN.

Several follow-up studies on GNN-to-MLP KDs since the success of GLNN~\cite{zhang2022glnn} mainly relied a common solution by leveraging the structural information as a part of the input to the student MLP~\citep{tian2023nosmog, chen2021samlp, zheng2022coldbrew} to achieve state-of-the-art performance, which however poses the following technical challenges.
\begin{itemize}
    \item \textbf{(Challenge 1)} In~\citep{tian2023nosmog}, the positional embeddings learned by DeepWalk~\cite{perozzi2014deepwalk} are concatenated into the node features as input to the student MLP. This necessitates consistent re-computation when the underlying graph evolves for maintenance.
    
    \item \textbf{(Challenge 2}) Other GNN-to-MLP approaches either directly utilize rows of the adjacency matrix, which makes the input dimension of the MLP dependent on the number of nodes~\citep{chen2021samlp}, or require a specific design of GNNs that is able to provide structural node features~\citep{zheng2022coldbrew}, thus reducing the flexibility of the model design.
\end{itemize}  
To tackle the above challenges, we aim to devise a new methodology such that MLPs are capable of learning the structural information while 1) {\it fixing} the input of the student MLP to the {\it node features only} and 2) injecting structural information during {\it training}. Additionally, since most GNN models stack up to only a few layers ({\it e.g.}, 2 GNN layers)~\citep{shchur2018pitfall, li2019deepgcn} in practice due to the oversmoothing problem~\citep{li2018deeper, chen2020measuring}, depending solely on the teacher GNN's output may disable the student MLP to capture high-order connectivity information, leaving a room for further performance improvement.

To achieve the goal of graph structure learning for MLPs along with the aforementioned constraints, we gain an insight from GNNs that separate feature transformation $T$ and propagation $\Pi$~\citep{gasteiger2019ppnpappnp, huang2021cns, bojchevski2020scalegnnappnp, chien2021generalappnp}. Based on these studies, we aim to further boost the performance of the student MLP using $\Pi$ during the KD process in an {\it explicit} and {\it interpretable} manner, which encodes a global structural view of the underlying graph. At its core, this approach begins by regarding the teacher GNN's output as a {\it base prediction} rather than the final prediction, allowing us to arrive at a formulation where the output of the student MLP first passes through an inverse propagation $\Pi^{-1}$ before being matched with the teacher GNN's output during KD. Although this approach can be interpreted as training the student MLP in such a way that it behaves as a graph learner embracing the propagation $\Pi$ by learning both $T$ and $\Pi$, it requires large matrix multiplications for each feed-forward process during the training process in KD. As a more efficient workaround, we propose {\bf Propagate \& Distill (P\&D)}, which approximates $\Pi^{-1}$ by recursively propagating the teacher GNN's output over the graph. Our approach also allows a room for more flexibility by adopting different propagation rules, akin to prior studies on label propagation~\citep{zhou2003lp, zhu2003lp2, huang2021cns}. We demonstrate the superiority of {\bf P\&D} on popular real-world benchmark datasets, and show that stronger propagation generally leads to better performance.

In summary, our contributions are as follows:
\begin{itemize}
    \item We present {\bf P\&D}, a simple yet effective GNN-to-MLP distillation method that allows additional structural information to be injected during training by recursively propagating the output of the teacher GNN;
    \item We empirically validate the effectiveness of {\bf P\&D} using real-world graph benchmark datasets for both transductive and inductive settings;
    \item We demonstrate that deeper and stronger propagation in {\bf P\&D} generally tends to achieve better performance.
\end{itemize}

\section{Methodology}
\label{section:proposedmethod}

In this section, we elaborate on our proposed framework {\bf P\&D}. We first describe the background of prior work that attempted to separate feature transformation and propagation. Then, we describe our formulation eventually leading to the proposed framework.


{\bf Background.} Typical GNN models stack multiple message passing layers, each of which consists of the propagation phase and the transformation phase~\citep{gilmer2017mpnn}. On the other hand, a handful of prior studies including~\citep{gasteiger2019ppnpappnp, huang2021cns, bojchevski2020scalegnnappnp, chien2021generalappnp} proposed to separate feature transformation and propagation in the GNN model. Given a feature vector ${\bf x}_i$, a feature transformation $\mathcal{T}: {\bf x}_i \rightarrow {\bf h}_i^t$ is first applied to calculate the base prediction ${\bf h}_i^t$, and then the GNN model propagates ${\bf h}_i^t$ along the underlying graph by a propagation operation $\Pi: {\bf h}_i^t \rightarrow {\bf p}_i^t$ to get the final prediction ${\bf p}_i^t$. As an example, PPNP~\citep{gasteiger2019ppnpappnp} employed an MLP model to learn the proper feature transformation $\mathcal{T}$ and utilized personalized PageRank (PPR) as the propagation operation $\Pi$. In PPNP, the propagation operation $\Pi = \Pi_{\text{PPR}}$ is characterized as $\Pi_{\text{PPR}} = (1 - \gamma)(I_{|\mathcal{V}|} -  \gamma\tilde{A})^{-1}$, where $1 - \gamma \in (0, 1]$ is the restart probability, $\mathcal{V}$ is the set of nodes, $I_{|\mathcal{V}|} \in \mathbb{R}^{|\mathcal{V}| \times |\mathcal{V}|}$ is an identity matrix, and $\tilde{A}$ is the symmetrically normalized adjacency matrix with self-loops. Such separation-based approaches have been shown to effectively encode the (global) structural information while avoiding the oversmoothing effect~\citep{gasteiger2019ppnpappnp, chien2021generalappnp}. In our study, to further boost the performance of the student MLP $f$, we re-frame the GNN-to-MLP KD problem by regarding the teacher GNN as $\text{GNN} \approx \Pi’ \circ T$, where $\Pi’$ does not perform enough propagation to benefit our graph learning task. In this new viewpoint, we do not want to solely rely on the output logits $P^t$ of the teacher GNN. Rather, $P^t \approx (\Pi’ \circ T)(X)$ can be further enhanced by an additional propagation $\Pi_{\text{PPR}}$ to complement $\Pi’$, and we set $f(X) = (\Pi_{\text{PPR}} \circ \Pi’ \circ T)(X)$ as our ideal objective of KD. However, since $\Pi=\Pi_{\text{PPR}}$ is defined as an (computationally expensive) inverse matrix, it is more reasonable to consider the distillation loss $\text{KL}((\Pi^{-1} \circ f)(X), (\Pi’ \circ T)(X))$, which results in:
\begin{equation}
    \mathcal{L}_\text{KL}=\text{KL}((\Pi^{-1} \circ f)(X), P^t) = \text{KL}((2 I_{|\mathcal{V}|} - \gamma \tilde{A})P^s, P^t),
    \label{eq:trainingobjective}
\end{equation}
where $\text{KL}$ denotes the Kullback–Leibler divergence, $P^s$ is the student MLP's output, and $\gamma$ comes from $\Pi_{\text{PPR}}$.\footnote{The constant term $(1-\gamma)^{-1}$ in $\Pi_{\text{PPR}}^{-1}$ can be ignored as we normalize both terms in the KL divergence loss.} In Eq.~(\ref{eq:trainingobjective}), we include an additional identity matrix, which can be interpreted as an additional skip-connection alongside $\Pi^{-1}$. By multiplying the term $2 I_{|\mathcal{V}|} - \gamma \tilde{A}$ by $P^s$ before calculating the loss, this formulation explicitly involves the structural information during training.

{\bf Propagate \& Distill (P\&D).} A downside of Eq.~(\ref{eq:trainingobjective}) is that, for every loss calculation during the KD process, we need to constantly multiply $2 I_{|\mathcal{V}|} - \gamma \tilde{A}$, thus increasing the computational cost. To remedy this, we choose a computationally efficient alternative formulation of Eq.~(\ref{eq:trainingobjective}) by \textit{approximating} the inverse matrix calculation in $\Pi_{\text{PPR}}$ along with a recursive formula, similarly as in the approach of APPNP~\citep{gasteiger2019ppnpappnp}. To this end, we propose {\bf P\&D}, which discovers an approximate propagation function $\Bar{\Pi} \approx \Pi$, where $\Bar{\Pi}$ is defined as a recursive formula that is applied to the output of the \textit{teacher GNN's prediction} instead of applying the inverse propagation function $\Pi^{-1}$ to the output of the student MLP. In other words, $\Bar{\Pi}$ propagates $P^t$ along the underlying graph by recursively applying
\begin{equation}
\label{equation:labelpropagation}
    P^{t}_{l+1} = \gamma \tilde{A} P^t_{l} + (1 - \gamma) P^t_{l},
\end{equation}
where we initially set $P^t_1 = P^t$ for $l = 1, \cdots, T$; and $\gamma \in (0,1]$ is a coefficient controlling the propagation strength through neighbors of each node. Denoting the propagation operation in {\bf P\&D} as $\Bar{\Pi}(P^t, \tilde{A})$, we now formulate our new distillation loss function $\mathcal{L}_{\textbf{P\&D}}$ as follows:
\begin{equation}
\label{equation:pndloss}
    \mathcal{L}_{\textbf{P\&D}} = \text{KL}(P^s, \Bar{\Pi}(P^t, \tilde{A})),
\end{equation}
which only requires an additional calculation of $\Bar{\Pi}(P^t, \tilde{A})$, leaving the rest of the KD process same as GLNN~\cite{zhang2022glnn}. Furthermore, this approach not only introduces another natural interpretation, but also allows a room for flexibility in the design of a recursive formula. Precisely, Eq.~(\ref{equation:labelpropagation}) can be seen as iteratively smoothing the output of the teacher's prediction along the graph structure, which is closely related to classic label propagation (LP) methods~\citep{zhou2003lp, zhu2003lp2} that propagate node label information rather than probability vectors. As in LP, {\bf P\&D} also takes advantage of the homophily assumption to potentially correct the predictions of incorrectly-predicted nodes with the aid of their (mostly correctly predicted) neighbors.\footnote{We refer to Appendix~\ref{section:SupplHomophily} for a theoretical analysis on the role of homophily.} Furthermore, thanks to the flexibility of the LP family, we introduce another variant, named as \textbf{P\&D}-fix. In this version, the $l$-th iteration of propagation now becomes
\begin{align}
    &\text{(Step 1) } P^{t}_{l+1} =  \gamma \tilde{A}P^t_{l} +(1 - \gamma) P^t_{l},\nonumber\\
    &\text{(Step 2) } P^{t}_{l+1}[j,:] = P^t[j, :] \text{ for } j \in \mathcal{V}_T, \label{equation:fixprop}
\end{align}
where $\mathcal{V}_T$ denotes the set of training nodes. Different from {\bf P\&D}, for every iteration, the output probability of training nodes gets manually replaced by the initial output probability (see Step 2 in Eq.~(\ref{equation:fixprop})). Adding Step 2 during propagation will lead to the initial output probability for some nodes in the training set as their predictions are expected to be nearly correct. In later descriptions, we denote {\bf P\&D} and {\bf P\&D}-fix as the versions using functions $\Bar{\Pi}$ and $\Bar{\Pi}_{\text{fix}}$, respectively. We also denote the previous inverse propagation approach in Eq.~(\ref{eq:trainingobjective}) as InvKD. Note that, during inference, we use $P^s$ as the student model's prediction, \textit{i.e.,} $f({\bf x}_i) = {\bf h}^s_i$.




\begin{table*}[!t]
\vskip -0.1in
\caption{Node classification accuracy (\%) for five different datasets in transductive and inductive settings. The columns represent the performance of the teacher GNN model, plain MLP model without distillation, GLNN~\citep{zhang2022glnn}, InvKD, and two versions of {\bf P\&D}. For each dataset, the performance of the best method is denoted in bold font.}
\label{table:mainexperiment}
\centering
\small
\resizebox{0.9\textwidth}{!}{%
\begin{tabular}{lcccccc}
\toprule
\textit{Transductive} & Teacher GNN & Plain MLP & GLNN & InvKD & {\bf P\&D} & {\bf P\&D}-fix  \\
\midrule
Cora & 78.81 ± 2.00 & 59.18 ± 1.60 & 80.73 ± 3.42 & 82.22 ± 1.45 & 82.16 ± 1.98 & \textbf{82.29} ± 1.60 \\
CiteSeer & 70.62 ± 2.24 & 58.51 ± 1.88 & 71.19 ± 1.36 &  74.08 ± 1.82 & 73.38 ± 1.39 & \textbf{74.93} ± 1.63 \\
Pubmed & 75.49 ± 2.25 & 68.39 ± 3.09 & 76.39 ± 2.36 & 77.22 ± 1.98 & 77.88 ± 2.89 & \textbf{78.11} ± 2.89 \\
A-Computer & 82.69 ± 1.26 & 67.79 ± 2.16 & 83.61 ± 1.49 & \textbf{83.81} ± 1.16 & 82.06 ± 1.58 & 83.21 ± 1.21 \\
A-Photo & 90.99 ± 1.34 & 77.29 ± 1.79 & 92.72 ± 1.11 &  92.83 ± 1.22 & 92.91 ± 1.31 & \textbf{93.02} ± 1.32 \\\midrule
\textit{Inductive} &  &  &  &  &  &  \\ \midrule
Cora & 80.61 ± 1.81 & 59.44 ± 3.36 & 73.07 ± 1.90 &  \textbf{75.18} ± 1.26 & 72.27 ± 2.74 & 71.24 ± 3.45 \\
CiteSeer & 69.83 ± 4.16 & 59.34 ± 4.61 & 68.37 ± 4.22 &  71.93 ± 3.16 & \textbf{72.87} ± 2.57 & 72.69 ± 2.40 \\
Pubmed & 75.25 ± 2.42 & 68.29 ± 3.26 & 75.01 ± 2.20 &  76.49 ± 2.47 & 76.49 ± 2.47 & \textbf{76.58} ± 2.34 \\
A-Computer & 83.06 ± 1.81 & 67.86 ± 2.16 & 79.77 ± 1.72 &  80.04 ± 1.90 & 80.28 ± 1.79 & \textbf{80.38} ± 1.59 \\
A-Photo & 91.21 ± 1.10 & 77.44 ± 1.50 & 89.73 ± 1.18 &  \textbf{90.28} ± 1.04 & 90.23 ± 1.02 & 89.87 ± 1.03 \\
\bottomrule
\end{tabular}
}
\vskip -0.2in
\end{table*}

\begin{table}
\centering
\small
\parbox{.49\textwidth}{
\centering
\caption{Node classification accuracy (\%) according to different $T$'s for the Cora dataset. The best performing cases are underlined.}
\Scale[0.94]{
\label{table:analysisTmini}
\begin{tabular}{lcccc}
\toprule
 $T$ & $\leq$5 & 10 & 20 & 50\\
\midrule
\multirow{2}{*}{{\em Trans.}} &  82.16      &   82.88          &  \underline{83.03} & 82.38 \\
                        &                    &  ($\uparrow$0.72) & ($\uparrow$0.87)     & ($\uparrow$0.22)\\\midrule
\multirow{2}{*}{{\em Ind.}}&  71.59           & \underline{72.27}   & 71.31      & 71.85\\
                        &                    & ($\uparrow$0.68)     & ($\downarrow$0.28)  & ($\uparrow$0.26)\\
\bottomrule
\end{tabular}
}
}
\hfill
\parbox{.49\textwidth}{
\centering
\caption{Node classification accuracy (\%) according to different $\gamma$'s for the Cora dataset. The performance gain of the case of $\gamma=0.9$ over the case of $\gamma=0.1$ is displayed in the parenthesis.}
\Scale[0.94]{
\label{table:analysisGmini}
\begin{tabular}{ccccc}
\toprule
\multicolumn{2}{c}{{\em Transductive}} & \multicolumn{2}{c}{{\em Inductive}}\\\midrule
$\gamma = 0.1$ & $\gamma = 0.9$ & $\gamma = 0.1$ & $\gamma = 0.9$\\
\midrule
80.35          & 82.16            & 70.87          & 71.99         \\
            & ($\uparrow$1.81) &                & ($\uparrow$1.12) \\
\bottomrule
\end{tabular}
}
}
\vskip -0.3in
\end{table}

\section{Main Results}
\label{section:maineresults}

In this section, we present experimental results to validate the effectiveness of {\bf P\&D} and {\bf P\&D}-fix with further empirical analyses of the approximate propagation function $\Bar{\Pi}$.\footnote{We refer to Appendix~\ref{appendix:analysisdetail} and~\ref{appendix:datasetstatistics} for further details.}

{\bf Experimental setup.} In our experiments, we mostly follow the settings of prior studies~\citep{zhang2022glnn, yang2021cpf}. Specifically, we focus only on the KL divergence loss (without the cross-entropy loss) for all experiments. We adopt a 2-layer GraphSAGE model~\citep{hamilton2017graphsage} with 128 hidden dimensions. We use the Adam optimizer~\citep{kingma2014adam}, batch size of 512, and early stopping with patience 50 during training. We report the average accuracy of the student MLP from 10 trials. 

{\bf Datasets.} We use the Cora, CiteSeer, Pubmed~\citep{sen2008ccp, yang2016ccp}, A-Computer, and A-Photo~\cite{shchur2018pitfall} datasets. We choose 20 / 30 nodes per class for the training / validation sets as in~\citep{shchur2018pitfall, zhang2022glnn}. For the inductive setting, we further sample 20\% of test nodes to be held out during training. 

{\bf Scenario settings.} In the transductive setting, we use the set of edges $\mathcal{E}$ and node features $X$ for all nodes in $\mathcal{V}$, along with the label information in the set of training nodes $\mathcal{V}_T$. In the inductive setting, all edges that connect nodes in the inductive subset $\mathcal{U}_{\text{ind}}$ and the rest of the graph (i.e., $\mathcal{V}_T \cup \mathcal{U}_{\text{obs}}$) are removed, and remain disconnected during the test phase, following~\citep{zhang2022glnn}.

{\bf Experimental results.} Table~\ref{table:mainexperiment} shows the performance comparison with GLNN~\citep{zhang2022glnn} as well as the teacher GNN and plain MLP models, in terms of the node classification accuracy for all five benchmark datasets. First, we observe that using one of InvKD, \textbf{P\&D}, and \textbf{P\&D}-fix consistently outperforms the benchmark methods regardless of datasets and scenario settings. For example, in the transductive setting using the CiteSeer dataset, {\bf P\&D}-fix exhibits the best performance with a gain of 3.74\% over GLNN, and in the inductive setting using the Cora dataset, InvKD is the best performer while showing a gain of 2.11\% over GLNN. Such a benefit from the inductive setting is meaningful since the propagation is performed without access to test nodes. 

We also investigate how the total number of iterations $T$ and the propagation strength $\gamma$ in Eq.~(\ref{equation:labelpropagation}) affects the performance. Here, we perform the analysis using {\bf P\&D} as our main framework.\footnote{We refer to Appendix~\ref{appendix:etcexperiments} for the analysis for other datasets and {\bf P\&D}-fix.} To see how the performance behaves with $T$, we consider four cases: $T \in \{1,2,5\}$, $T = 10$, $T = 20$, and $T = 50$. We measure the performance gain compared to the first case ({\it i.e.}, $T \in \{1,2,5\}$) using the Cora dataset. Table~\ref{table:analysisTmini} shows that, in both transductive/inductive settings, the best performance can be achieved when $T$ is sufficiently large ({\it i.e.}, $T\ge 10$). Next, to see how the performance behaves with $\gamma$, we consider two cases: $\gamma = 0.1$ and $\gamma = 0.9$. Table~\ref{table:analysisGmini} shows that stronger propagation ({\it i.e.}, $\gamma=0.9$) leads to higher performance in both cases.

\section{Conclusion}

We presented {\bf P\&D}, a simple yet effective method to boost the performance of MLP models trained by distillation from a teacher GNN model. We empirically showed that applying an approximate propagation $\Bar{\Pi}$ to the teacher GNN's output eventually benefits the student MLP model after KD on real-world graph benchmark datasets for both transductive and inductive settings. Our future work includes the potential enhancement of the inverse propagation in learning $\Pi^{-1}$ as a separate model.

\section*{Acknowledgements}
This work was supported by the National Research Foundation of Korea (NRF) grant funded by the Korea government (MSIT) (No. 2021R1A2C3004345, No. RS-2023-00220762).

\bibliographystyle{unsrtnat}
\bibliography{reference}

\appendix

\section{Preliminaries}
\label{section:background}
In this section, we summarize several preliminaries to our work, along with basic notations.

{\bf Semi-supervised node classification.} In SSNC, we are given a graph dataset $G = (\mathcal{V}, \mathcal{E}, X)$, where $\mathcal{V}$ is the set of nodes, $\mathcal{E} \subseteq \mathcal{V} \times \mathcal{V}$ is the set of edges, and $X \in \mathbb{R}^{|\mathcal{V}| \times d}$ is the node feature matrix where the $i$-th row ${\bf x}_i = X[i,:] \in \mathbb{R}^d$ is the $d$-dimensional feature vector of node $v_i \in \mathcal{V}$. We also denote $A \in \mathbb{R}^{|\mathcal{V}| \times |\mathcal{V}|}$ as the adjacency matrix to represent $\mathcal{E}$, where $A[i,j] = 1$ if $(i,j)\in\mathcal{E}$, and 0 elsewhere. The degree matrix $D = \text{diag}(A{\bf 1}_{|\mathcal{V}|})$ is a diagonal matrix whose diagonal entry $D[i, i]$ represents the number of neighbors for $v_i$, where ${\bf 1}_{|\mathcal{V}|}$ is the all-ones vector of dimension $|\mathcal{V}|$. Alongside $G$, $\mathcal{Y}$ indicates the set of class labels, and each node $v_i$ is associated with a ground truth label $y_i \in \mathcal{Y}$. Typically, $y_i$ is encoded as a one-hot vector ${\bf y}_i \in \mathbb{R}^{|\mathcal{Y}|}$. In SSNC, we assume that only a small subset of nodes $\mathcal{V}_T \subset \mathcal{V}$ have their class labels known during training. The objective of SSNC is to predict the class label for the rest of the nodes in $\mathcal{U} = \mathcal{V} \setminus \mathcal{V}_T$. In the transductive setting, we assume that access to information other than its labels (i.e., node features and associated edges) for all nodes in $\mathcal{U}$ is available during training. In the inductive setting, a held-out subset of nodes in $\mathcal{U}$ by separating into two disjoint subsets, namely the observed subset $\mathcal{U}_{\text{obs}}$ and the inductive subset $\mathcal{U}_{\text{ind}}$ ($\mathcal{U} = \mathcal{U}_{\text{obs}} \cup \mathcal{U}_{\text{ind}}$). The inductive subset $\mathcal{U}_{\text{ind}}$ is completely unknown during training, and the objective is to predict the labels of those unseen nodes.

{\bf KD from GNNs to MLPs.} Recently, several studies have put their efforts to leverage MLP models as the main architecture for SSNC~\citep{zhang2022glnn, tian2023nosmog, hu2021graphmlp, zheng2022coldbrew, chen2021samlp, Dong2022mlpgraphlearner, wu2023pgkd, Wu2023mlpgraphlearner}. Most of these attempts adopt the KD framework~\citep{bucila2006modelcompression, hinton2015kd, gou2021kdsurvey} by transferring knowledge from a teacher GNN to a student MLP. As the core component, knowledge transfer is carried out by matching soft labels via a loss function $\mathcal{L}_{\text{KL}}$, which plays a role of matching the output probability distributions between the teacher and student models with respect to the Kullback–Leibler (KL) divergence. Although other distillation designs have been proposed since~\citep{hinton2015kd}, distillation via $\mathcal{L}_{\text{KL}}$ has been a popular choice and is adopted in lots of follow-up studies that aim to distill knowledge from GNNs to MLPs.

More precisely, in our distillation setting where an MLP is trained via distillation from a teacher GNN, we first assume that the output of the teacher GNN, $H^t \in \mathbb{R}^{|\mathcal{V}| \times |\mathcal{Y}|}$, is given, where ${\bf h}^t_i = H^t[i,:]$ represents the output logit for node $v_i$. The objective of KD from GNNs to MLPs is to train the student MLP model $f$, which returns an output logit $f({\bf x}_i) = {\bf h}^s_i$ for a given node feature vector ${\bf x}_i$ of node $v_i$ as input. The two output logits ${\bf h}_i^t$ and ${\bf h}^s_i$ are transformed into class probability distributions by a softmax function, i.e., ${\bf p}^t_i = \texttt{softmax}({\bf h}^t_i)$ and ${\bf p}^s_i = \texttt{softmax}({\bf h}^s_i)$, respectively. During distillation, $\mathcal{L}_{\text{KL}} \triangleq \text{KL}({\bf p}^s_i, {\bf p}^t_i)$ compares the student's output probability ${\bf p}_i^s$ and the teacher's output probability ${\bf p}_i^t$ by a KL divergence loss. A mix of $\text{KL}({\bf p}^s_i, {\bf p}^t_i)$ and the cross-entropy loss, denoted as $\text{CE}({\bf p}^s_i, {\bf y}_i)$, with labeled nodes is used as a final loss function:
\begin{equation}
\label{eq:distillloss}
    \mathcal{L}_{\text{Distill}} = \alpha \sum_{i \in \mathcal{V}_T} \text{CE}({\bf p}^s_i, {\bf y}_i) + (1 - \alpha) \sum_{i \in \mathcal{V}} \text{KL}({\bf p}^s_i, {\bf p}^t_i),
\end{equation}
where $\alpha \in [0,1]$ is a mixing parameter. After training, only the MLP model $f$ is used during inference, which dramatically improve the computational efficiency since the feed-forward process basically involves only matrix multiplication and element-wise operations, without message passing~\citep{zhang2022glnn, tian2023nosmog}. Since a majority of GNN-to-MLP distillation methods adopt only the second term as their distillation loss~\citep{zhang2022glnn, tian2023nosmog, chen2021samlp}, we also focus on Eq.~(\ref{eq:distillloss}) to set $\alpha=0$ in the KL divergence loss in our study.

\section{Case Study on Interpretations} 
\label{section:CaseStudy}

\begin{figure}[h]
  \includegraphics[width=\textwidth]{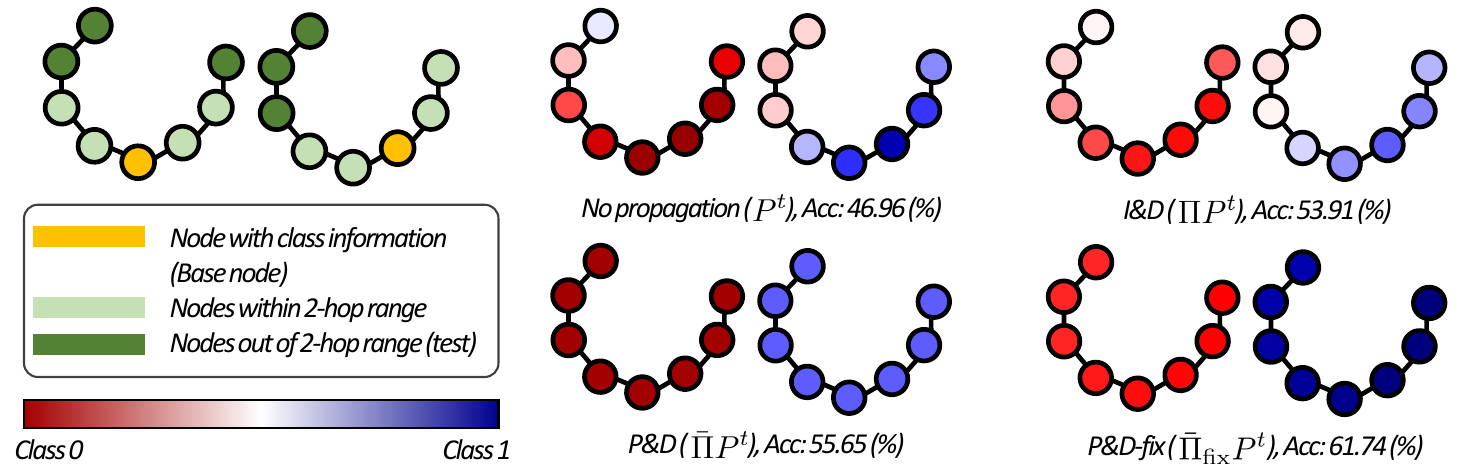}
  \caption{Visualization of the effect of various propagation functions in InvKD and {\bf P\&D} on the synthetic Chains dataset.}\label{fig:chainsexperiment}
\end{figure}

In order to provide interpretations on the benefits of injecting additional structure information in InvKD and \textbf{P\&D} during distillation, we perform experiments on the synthetic Chains dataset~\cite{gu2020ignn}, which consists of 30 chain graphs of a fixed length of 8. Fig.~\ref{fig:chainsexperiment} visualizes 2 chains for each propagation for ease of presentation. All nodes in the same chain are assigned to the same class, and the class information is provided as a one-hot representation in the feature vector of \textit{one} of the nodes (the base node) in the chain. To train the teacher GNN, we adopt a 2-layer GraphSAGE model, which is thus able to exploit connectivities only within 2-hop neighbors of the base node. Then, we plot $P^t$, $\Pi P^t$, $\Bar{\Pi} P^t$, and $\Bar{\Pi}_{\text{fix}}P^t$, which correspond to the case for GLNN, InvKD, \textbf{P\&D}, \textbf{P\&D}-fix, respectively.\footnote{Although we do not directly use $\Pi P^t$ in the loss in InvKD, we can consider this as the final prediction when the student MLP ideally achieves the zero loss during training.} Compared to $P^t$ where the teacher GNN only correctly predicts the nodes near base node, other propagations $\Pi P^t$, $\bar{\Pi} P^t$, and $\bar{\Pi}_\text{fix} P^t$ further spread the correct label information along the graph, while self-correcting the base prediction of $P^t$. Additionally, we evaluate the accuracy of the student MLP on the nodes further than 2-hops away from the base nodes (see dark green nodes in the left part of Fig.~\ref{fig:chainsexperiment}). We can observe that self-correction indeed benefits the student MLP. For example, for the nodes out of the 2-hop range, using $\Bar{\Pi}_{\text{fix}}P^t$ results in the accuracy of $61.74\%$ compared to the case of using $P^t$ showing the accuracy of $46.96\%$. This case study clearly validates the effect of our inverse/recursive propagation functions.\nopar

\section{Further Results of Propagation Analysis}
\label{appendix:etcexperiments}

In this subsection, we provide additional results of the effects of $T$ and $\gamma$ for both \textbf{P\&D} and \textbf{P\&D}-fix. Table~\ref{table:analysisTfull} and~\ref{table:analysisGfull} show the effects of $T$ and $\gamma$, respectively, for \textbf{P\&D} and \textbf{P\&D}-fix. Table~\ref{table:analysisTfull} shows that indicating that higher values of $T$ are more beneficial holds for other datasets on both \textbf{P\&D} and \textbf{P\&D}-fix, with the exception of \textbf{P\&D} on Citeseer for the inductive setting. Also, Table~\ref{table:analysisGfull} indicates that higher values of $\gamma$ are more beneficial holds for other datasets on both \textbf{P\&D} and \textbf{P\&D}-fix.

\begin{table}[h]
\centering
\caption{Node classification accuracy (\%) according to different $T$'s for three different datasets. The best performing cases are underlined.}
\begin{small}
\Scale[0.9]{
\label{table:analysisTfull}
\begin{tabular}{lccccccc}
\toprule
 &  & \multicolumn{3}{c}{\textbf{P\&D}} & \multicolumn{3}{c}{\textbf{P\&D}-fix} \\
\cmidrule{3-8}
 $T$ & & Cora & CiteSeer & PubMed & Cora & CiteSeer & PubMed \\
\midrule
\multirow{2}{*}{$\leq$5}& {\em Trans.} & 82.16 & 73.72 & 76.68 & 81.64 & 74.93 & 77.14\\
                        & {\em Ind.} & 71.59 & \underline{72.87} & 76.49 & 71.24 & 72.69 & 76.39\\
\midrule
\multirow{4}{*}{10}     & \multirow{2}{*}{{\em Trans.}} & 82.88 & 73.65 & \underline{77.88} & \underline{82.35} & 74.01 & 77.06 \\
                        & & ($\uparrow$0.72)  & ($\downarrow$0.07) & ($\uparrow$1.20) & ($\uparrow$0.71)  & ($\downarrow$0.92) & ($\downarrow$0.08)\\
\cmidrule{2-8}
                        & \multirow{2}{*}{{\em Ind.}} & \underline{72.27} & 72.21 & 76.57 & 70.59 & 70.59 & \underline{76.59}\\
                        & & ($\uparrow$0.68)  & ($\downarrow$0.66) & ($\uparrow$0.08) & ($\downarrow$0.65)  & ($\downarrow$2.10) & ($\uparrow$0.20)\\
\midrule
\multirow{4}{*}{20}    & \multirow{2}{*}{{\em Trans.}} & \underline{83.03} & \underline{73.74} & 76.56 & 81.85 & 74.04 & 77.54\\
                       & & ($\uparrow$0.87)  & ($\uparrow$0.02) & ($\downarrow$0.12) & ($\uparrow$0.21)  & ($\downarrow$0.89) & ($\uparrow$0.40)\\
\cmidrule{2-8}
                        & \multirow{2}{*}{{\em Ind.}} & 71.31 & 72.21 & \underline{76.62} & \underline{71.85} & 71.85 & 76.36\\
                        & & ($\downarrow$0.28)  & ($\downarrow$0.66) & ($\uparrow$0.13) & ($\uparrow$0.61)  & ($\downarrow$0.84) & ($\downarrow$0.03)\\
\midrule
\multirow{4}{*}{50}     & \multirow{2}{*}{{\em Trans.}} & 82.38 & 73.38 & 77.01 & 82.29 & \underline{74.97} & \underline{78.11}\\
                        & & ($\uparrow$0.22)  & ($\downarrow$0.34) & ($\uparrow$0.33) & ($\uparrow$0.65)  & ($\uparrow$0.04) & ($\uparrow$0.97)\\
\cmidrule{2-8}
                        & \multirow{2}{*}{{\em Ind.}} & 71.85 & 71.77 & 76.48 & 71.66 & \underline{72.76} & 76.58\\
                        & & ($\uparrow$0.26)  & ($\downarrow$1.10) & ($\downarrow$0.01) & ($\uparrow$0.42)  & ($\uparrow$0.07) & ($\uparrow$0.19)\\
\bottomrule
\end{tabular}
}
\end{small}
\end{table}

\begin{table}[h]
\caption{Node classification accuracy (\%) according to different $\gamma$'s for three different datasets for \textbf{P\&D} and \textbf{P\&D}-fix. The performance gain of the case of $\gamma = 0.9$ over the case of $\gamma = 0.1$ is displayed in the parenthesis.}
\label{table:analysisGfull}
\begin{small}
\begin{center}
\Scale[0.9]{
\begin{tabular}{lcccccccc}
\toprule
    & \multicolumn{4}{c}{\textbf{P\&D}} & \multicolumn{4}{c}{\textbf{P\&D}-fix} \\
\cmidrule{2-9}
\multirow{2}{*}{Dataset}& \multicolumn{2}{c}{{\em Transductive}} & \multicolumn{2}{c}{{\em Inductive}} & \multicolumn{2}{c}{{\em Transductive}} & \multicolumn{2}{c}{{\em Inductive}}\\
\cmidrule{2-9}
                        & $\gamma = 0.1$ & $\gamma = 0.9$ & $\gamma = 0.1$ & $\gamma = 0.9$ & $\gamma = 0.1$ & $\gamma = 0.9$ & $\gamma = 0.1$ & $\gamma = 0.9$\\
\midrule
\multirow{2}{*}{Cora}    & 80.35 & 82.16 & 70.87 & 71.99 & 80.85 & 82.35 & 69.93 & 71.24\\
                         & & ($\uparrow$1.81) & & ($\uparrow$1.12) & & ($\uparrow$1.50) & & ($\uparrow$1.31) \\
\midrule
\multirow{2}{*}{CiteSeer}    & 72.70 & 73.38 & 71.60 & 72.87 & 74.47 & 74.93 & 69.93 & 72.69\\
                         & & ($\uparrow$0.68) & & ($\uparrow$1.27) & & ($\uparrow$0.46) & & ($\uparrow$2.76) \\
\midrule
\multirow{2}{*}{PubMed}    & 76.56 & 77.88 & 75.84 & 76.49 & 76.89 & 78.11 & 75.79 & 76.58\\
                         & & ($\uparrow$1.32) & & ($\uparrow$0.65) & & ($\uparrow$1.22) & & ($\uparrow$0.79) \\
\bottomrule
\end{tabular}
}
\end{center}
\end{small}
\end{table}

\section{Connections to Graph Signal Denoising} 
We can interpret the basis of our framework as applying graph signal denoising (GSD) to the student MLP. We first present the following theorem that connects propagation and GSD:

\begin{theorem}[GSD of PPNP~\citep{ma2021gsd}]
\label{theorem:originalgraphsignaldenoising}
    Given a noisy signal $S \in \mathbb{R}^{|\mathcal{V}| \times |\mathcal{Y}|}$, PPNP~\citep{gasteiger2019ppnpappnp} solves a GSD problem, where the goal is to recover a clean signal $F \in \mathbb{R}^{|\mathcal{V}| \times |\mathcal{Y}|}$ by solving the following optimization problem: 
    \begin{equation}
    \label{ep:graphsignaldenoisingobjective}
        \text{arg}\min_{F} \mathcal{L}_{\text{GSD}} = ||F - S||^2 + (1/(1 -\gamma) - 1) \text{tr}(F^\top L F),
    \end{equation}
    where $L = I_{|\mathcal{V}|} - \tilde{A}$ is the Laplacian matrix.
\end{theorem}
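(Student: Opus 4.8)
The plan is to use the fact that $\mathcal{L}_{\text{GSD}}$ is a strictly convex quadratic in $F$, so that its unique global minimizer is pinned down by the first-order stationarity condition, and then to verify that this minimizer coincides with the PPNP propagation $\Pi_{\text{PPR}} S = (1-\gamma)(I_{|\mathcal{V}|} - \gamma\tilde{A})^{-1} S$. First I would compute the gradient $\nabla_F \mathcal{L}_{\text{GSD}}$, reading $\|\cdot\|^2$ as the squared Frobenius norm. The data-fidelity term contributes $2(F-S)$, while for the smoothness penalty I invoke the standard matrix-calculus identity $\nabla_F \operatorname{tr}(F^\top L F) = (L + L^\top)F = 2LF$, where the final equality uses that $L = I_{|\mathcal{V}|} - \tilde{A}$ is symmetric (since $\tilde{A}$ is the symmetrically normalized adjacency matrix). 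Writing $c = 1/(1-\gamma) - 1 = \gamma/(1-\gamma)$ and setting the gradient to zero yields the linear system $(I_{|\mathcal{V}|} + cL)\,F = S$.

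The second step is to invert this system and simplify. I would substitute $L = I_{|\mathcal{V}|} - \tilde{A}$ and collect terms to obtain
\[
    I_{|\mathcal{V}|} + cL = \left(1 + \tfrac{\gamma}{1-\gamma}\right) I_{|\mathcal{V}|} - \tfrac{\gamma}{1-\gamma}\tilde{A} = \tfrac{1}{1-\gamma}\left(I_{|\mathcal{V}|} - \gamma\tilde{A}\right).
\]
Inverting and cancelling the scalar factor gives $F = (1-\gamma)(I_{|\mathcal{V}|} - \gamma\tilde{A})^{-1} S = \Pi_{\text{PPR}} S$, which is precisely the PPNP propagation applied to the noisy signal $S$, as claimed.

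Finally, I would confirm that this stationary point is genuinely the unique global minimum and that every inverse appearing above is well-defined. Because $\tilde{A}$ is symmetrically normalized, its eigenvalues lie in $[-1,1]$, so $L = I_{|\mathcal{V}|} - \tilde{A}$ is positive semidefinite; for $\gamma \in (0,1)$ we have $c > 0$, whence the Hessian $2(I_{|\mathcal{V}|} + cL)$ is positive definite, guaranteeing strict convexity, a unique minimizer, and invertibility of $I_{|\mathcal{V}|} + cL$. The same spectral bound shows $\gamma\tilde{A}$ has eigenvalues in $(-1,1)$, so $1 - \gamma\lambda > 0$ for every eigenvalue $\lambda$ of $\tilde{A}$ and $I_{|\mathcal{V}|} - \gamma\tilde{A}$ is nonsingular, validating the closed form. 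The only step demanding any care is the gradient of the trace term, but once the symmetry of $L$ is used the computation is routine; I do not anticipate any serious obstacle beyond carefully bookkeeping the scalar coefficient $c$.
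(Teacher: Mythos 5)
Your proof is correct: the stationarity condition $(I_{|\mathcal{V}|} + \tfrac{\gamma}{1-\gamma}L)F = S$ of the strictly convex quadratic objective does simplify to $F = (1-\gamma)(I_{|\mathcal{V}|} - \gamma\tilde{A})^{-1}S = \Pi_{\text{PPR}}S$, and your positive-definiteness argument settles uniqueness and invertibility. Note that the paper itself gives no proof of this theorem (it imports it by citation from the graph-signal-denoising literature), and your first-order-condition derivation is exactly the standard argument used there, so there is nothing to flag.
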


\begin{figure}[h]
\vskip -0.2in
    \centering
    \subfigure[Cora.]{
    \centering
    \includegraphics[width=0.3\textwidth]{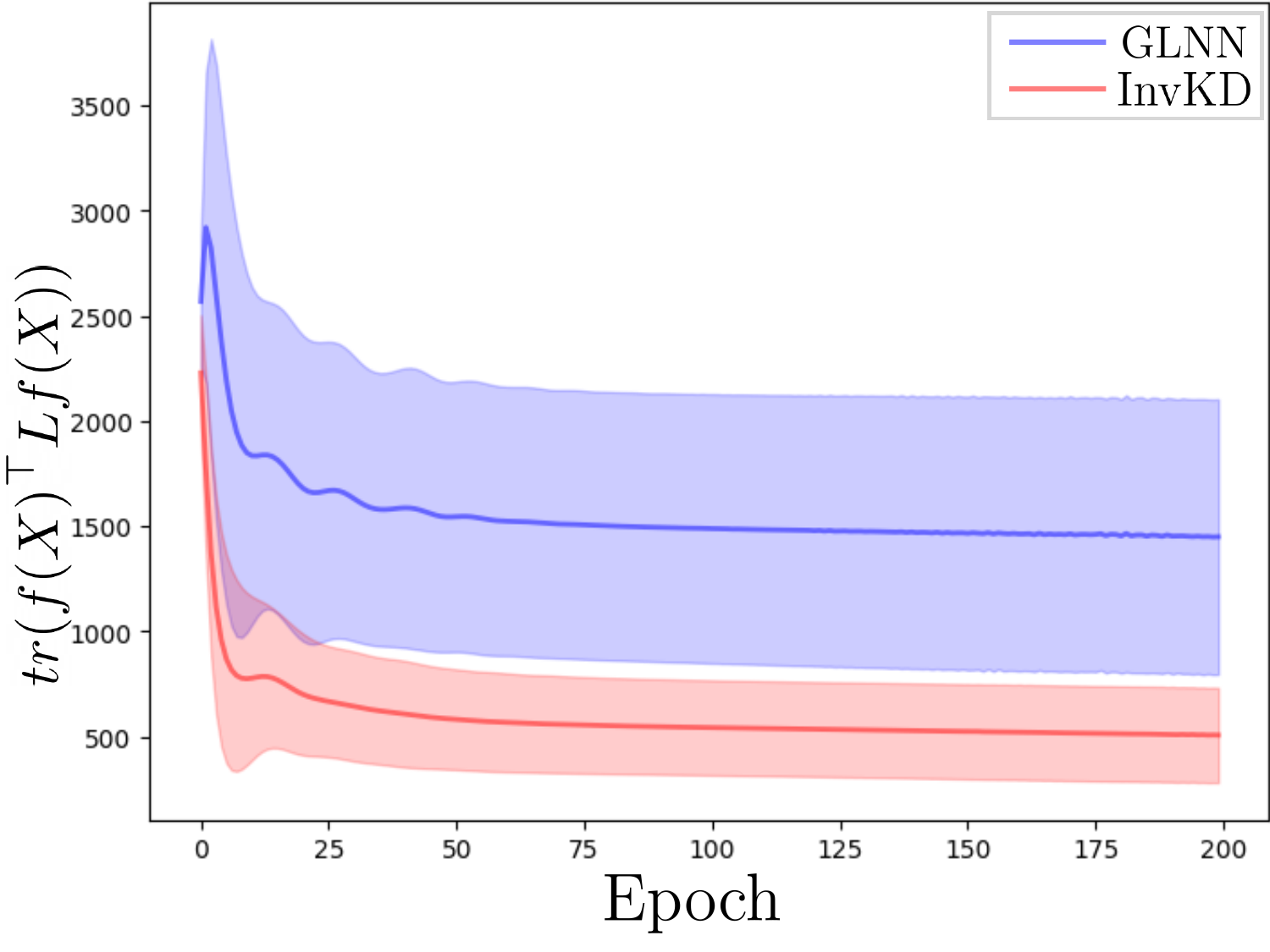}
    \label{figure:app-lap-a}
    }
    \centering
    \subfigure[CiteSeer.]{
    \centering
    \includegraphics[width=0.3\textwidth]{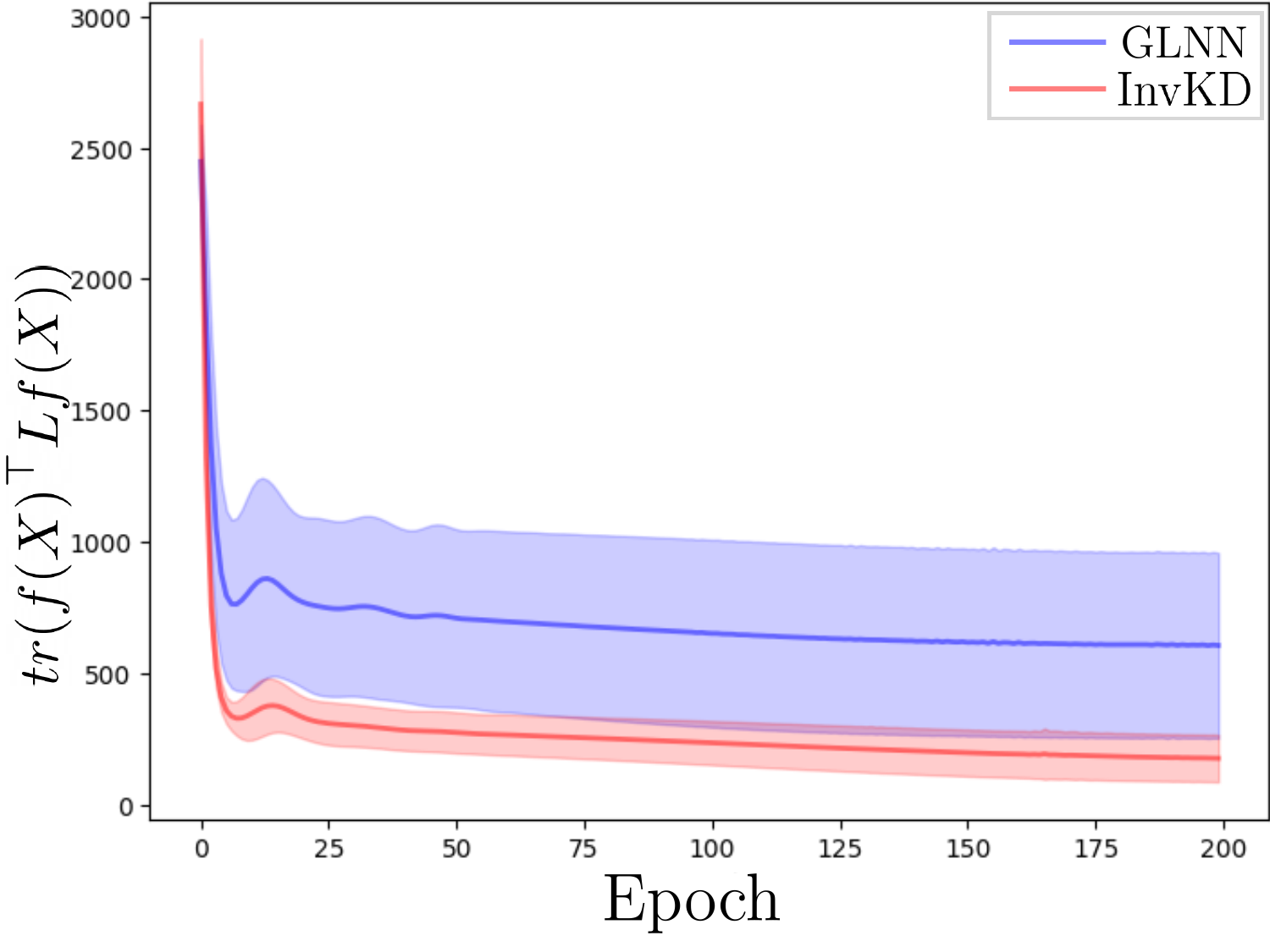}
    \label{figure:app-lap-b}
    }
    \centering
    \subfigure[PubMed.]{
    \centering
    \includegraphics[width=0.3\textwidth]{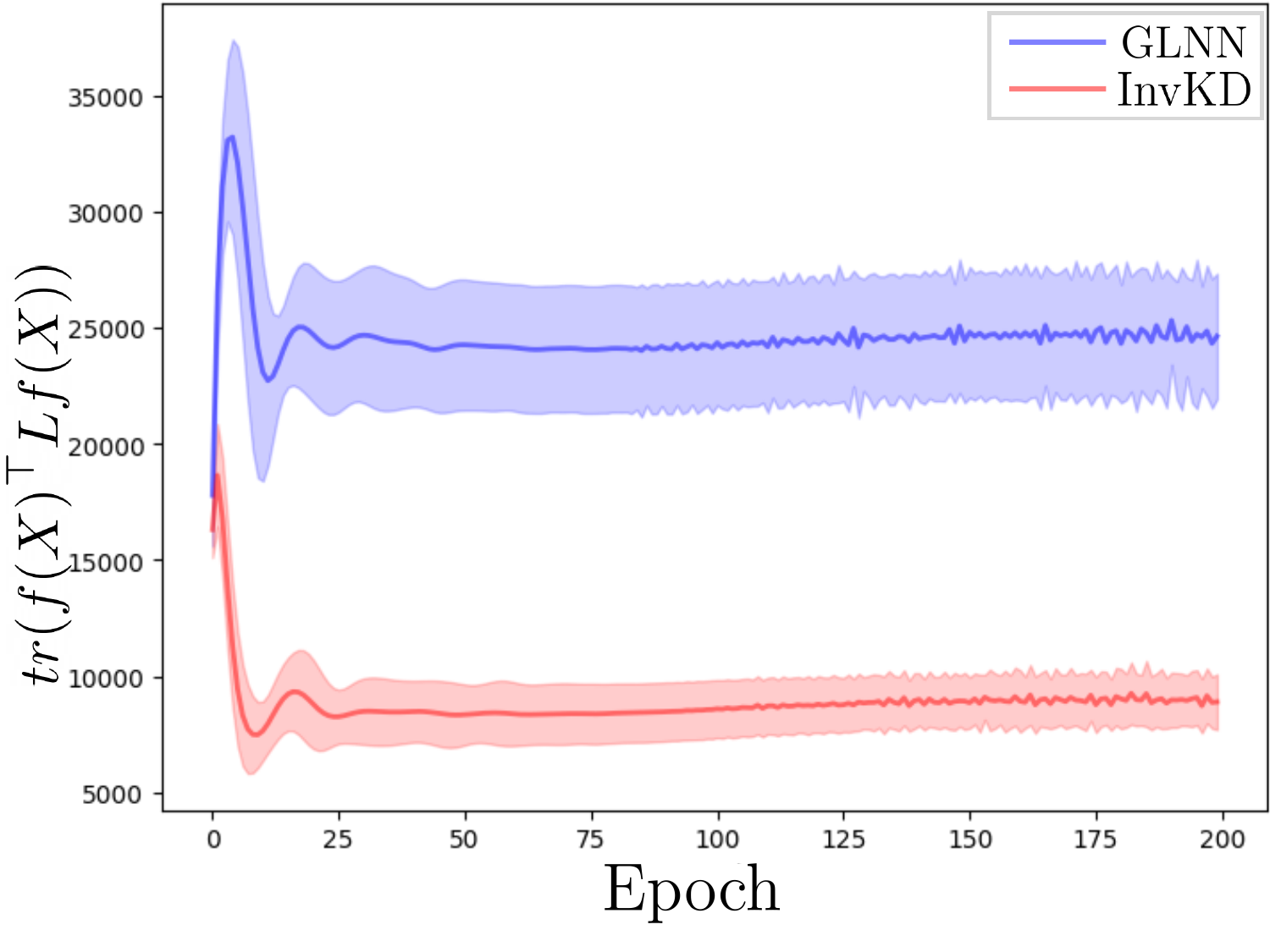}
    \label{figure:app-lap-c}
    }
    \caption{$\text{tr}(f(X)^\top L f(X))$ versus the number of epochs.}
    \label{fig:laplacianvsepoch}
\end{figure}

By interpreting $F = f(X)$ and $S = P^t$ in Theorem~\ref{theorem:originalgraphsignaldenoising}, InvKD can be seen as training the student MLP to fit a given signal $P^t$ with an additional smoothing constraint $(1/(1 -\gamma) - 1) \text{tr}(F^\top L F)$. To explicitly see this effect, we perform a experiment on the Cora, Citeseer, and Pubmed datasets where we plot the regularization term $\text{tr}(f(X)^\top L f(X))$ versus the number of epochs during training for GLNN and InvKD. 

The results in Fig.~\ref{fig:laplacianvsepoch} shows that a much stronger regularization effect for the case of InvKD (red) compared to na\"ive KD (blue). In conclusion, the inverse propagation $\Pi^{-1}$ in InvKD further forces the student MLP to return a signal smoothened over the graph.

\section{Ablation Study on the Inverse Propagation}


During our development in InvKD where {\bf P\&D} is based on, our inverse propagation function $\Pi^{-1}$ reveals a form similar to the Laplacian matrix $I_{|\mathcal{V}|}-\tilde{A}$ (see Eq.~(\ref{eq:trainingobjective})). Then, a natural question raising is ``Is $\Pi^{-1}$ in InvKD replaceable with alternative operations during distillation?". To answer this question, we run a simple experiment by taking into account two alternatives. First, one can expect that the student model $f$ may also learn the structural information when a convolution function ({\it i.e.}, the adjacency matrix with self-loops) is applied to the output of the student MLP instead of $\Pi^{-1}$ since the gradients of the model parameters are also influenced by $\tilde{A}$. Thus, we consider training the student MLP with $\mathcal{L}_{\text{conv}} \triangleq \text{KL}(\tilde{A}P^s, P^t)$. Second, as another alternative, we use $\mathcal{L}_{\text{Distill}}$ in Eq.~(\ref{eq:distillloss}), which does not contain any additional operation on $P^s$. We follow the same model configurations as those in Section~\ref{section:maineresults}, while using the node splits of~\citep{yang2016ccp} with full-batch training in the transductive setting. 

\begin{table}[h]
\caption{Performance comparison when three different loss functions are used during training in the transductive setting.}
\label{table:indanalysis}
\small
\centering
\begin{tabular}{lccc}
\toprule
Dataset  & $\mathcal{L}_{\text{conv}}$ & $\mathcal{L}_{\text{Distill}}$ & $\mathcal{L}_{InvKD}$  \\
\midrule
Cora & 64.78 ± 0.92 & 73.84 ± 0.52 & 76.20 ± 0.48 \\
CiteSeer & 69.71 ± 0.48 & 70.23 ± 0.40 & 72.43 ± 0.56 \\
PubMed & 60.27 ± 9.64 & 76.77 ± 1.02  & 77.67 ± 0.73 \\
\bottomrule
\end{tabular}
\end{table}

Table~\ref{table:indanalysis} shows the experimental results for three different cases when three datasets including Cora, Citeseer, and PubMed are used in the transductive setting. Interestingly, we can observe that the case of using $\mathcal{L}_{\text{conv}}$ exhibits much lower performance than that of other two cases.
This is because, while using $\mathcal{L}_{\text{conv}}$ explicitly accommodates the graph structure during distillation, it rather alleviates the pressure for the student MLP to learn the graph structure as $\tilde{A}$ is multiplied anyway during training; however, during inference, the student MLP have lower information on the graph structure with $\tilde{A}$ gone, which eventually harms the performance. Hence, this implies that using a na\"ive alternative operation may deteriorate the performance and a judicious design of the propagation operation is essential in guaranteeing state-of-the-art performance.

\section{Statistics of Datasets}
\label{appendix:datasetstatistics}
We show the table that contains the statistics of the six real-world datasets used in the experiments in Table~\ref{table:datasetstats}.

\begin{table}[h]
\caption{Statistics of six real-world datasets. NN, NE, NF, and NC denote the number of nodes, the number
of edges, the number of node features, the number of classes, respectively.}
\label{table:datasetstats}
\vskip 0.15in
\begin{center}
\begin{small}
\begin{tabular}{lcccc}
\toprule
Dataset & NN & NE  & NF & NC \\
\midrule
Cora & 2,485 & 5,069 & 1,433 & 7\\
CiteSeer & 2,120 & 3,679 & 3,703 & 6\\
Pubmed & 19,717 & 44,324 & 500 & 3\\
A-computer & 13,381 & 245,778 & 767 & 10\\
A-photo & 7,487 & 119,043 & 745 & 8\\
\bottomrule
\end{tabular}
\end{small}
\end{center}
\vskip -0.1in
\end{table}

\section{Further Experiment Details}
\label{appendix:analysisdetail}
\subsection{Model Hyperparameters}
The hyperparameters for the GraphSAGE model used as the teacher GNN including the number of layers, hidden dimension, learning rate, {\it etc.} are configured by essentially following the values of prior studies~\citep{yang2021cpf, zhang2022glnn, tian2023nosmog}. The full details of the settings are shown in Table~\ref{table:hyperparameter}.

\begin{table}[h]
\caption{Hyperparameter settings for the teacher GNN used in the experiments.}
\label{table:hyperparameter}
\vskip 0.15in
\begin{center}
\begin{small}
\begin{tabular}{lcccccc}
\toprule
 & Num. of layers & Hidden dim. & Learning rate & Dropout ratio & Weight decay & Fan out \\
\midrule
Setting & 2 & 128 & 0.01 & 0 & 0.0005 & 5,5 \\
\bottomrule
\end{tabular}
\end{small}
\end{center}
\vskip -0.1in
\end{table}

For the student MLP in P\&D, we perform a hyperparameter search of the learning rate in [0.01, 0.005, 0.001], weight decay in [0.005, 0.001, 0], dropout ratio in [0, 0.1, 0.2, 0.3, 0.4, 0.5, 0.6, 0.7, 0.8], and $\gamma$ in [0.1, 0.2, 0.3, 0.4, 0.5, 0.6, 0.7, 0.8, 0.9]. For P\&D and P\&D-fix, we perform the same hyperparameter search for learning rate, weight decay, and dropout ratio as in P\&D, while performing the search of $\gamma$ in [0.1, 0.9] and $T$ in [1, 2, 5, 10, 20, 50].

\subsection{Implementation Details}

For implementing our experiments, we use Pytorch version 1.12.0, and all GNN models are implemented using Pytorch Geometric version 2.0.4 and DGL version 0.9.1.

\section{Theoretical Analysis for Self-Correction via Propagation}
\label{section:theoreticalanalysis}
In this section, we provide a theoretical analysis of propagating the teacher's output $P^t$ along the graph in the context of {\it self-correction}. {\bf P\&D} relies on the assumption that the underlying graph has high levels of homophily, which measures the ratio of edges where the two connected nodes have the same class label~\citep{zhu2020h2gcn}. In this setting, we are interested in analyzing the condition where the prediction of a (incorrectly-predicted) node becomes corrected after one iteration of propagation by Eq.~(\ref{equation:labelpropagation}). We start by formally addressing basic settings and assumptions, which essentially follow those of~\citep{zhu2020h2gcn}. Let us assume that the underlying graph $G$ is regular ({\it i.e.}, all nodes have a degree of $d$) and $h\in[0,1]$ portion of neighbors have the same label for all nodes in $v\in\mathcal{V}$. For each node, the teacher GNN is assumed to be assigned an output probability vector having a probability $p \in [0,1]$ (with $p > 1/|\mathcal{Y}|$) for the true class label and another probability $(1-p)/(|\mathcal{Y}|-1)$ for the rest of the classes if the teacher GNN always makes predictions correctly.

Now, without loss of generality, let us assume that the teacher GNN makes an {\em incorrect} prediction for a particular node of interest $v_{*}$ with class 0 as its ground truth label by assigning a new output probability vector
\begin{equation}
    P^t[*,:] = \left[q, \dfrac{1-q}{|\mathcal{Y}|-1}, \cdots, \dfrac{1-q}{|\mathcal{Y}|-1}\right]
\end{equation}
with $0 < q < 1/|\mathcal{Y}|$, thus no longer assigning class 0 as its prediction. Additionally, for the \textit{rest} of the nodes, we introduce an error ratio $\epsilon \in (0, 1)$ to the teacher GNN's predictions, which assumes that the teacher model provides incorrect predictions for $\epsilon(|\mathcal{V}|-1)$ nodes (excluding $v_{*}$ itself). For the sake of simplicity, we assume that the probability of a node being incorrect by the teacher GNN is independent of its ground truth class label, which establishes the following theorem:

\begin{theorem}
\label{theorem:smoothingcorrectionwitherror}
    Suppose that the teacher GNN provides incorrect predictions for $\epsilon(|\mathcal{V}| - 1)$ nodes other than node $v_*$ where $\epsilon\in(0,1)$. Then, using one iteration of propagation in Eq.~(\ref{equation:labelpropagation}), the prediction of node $v_*$ gets corrected if
    \begin{equation}
    \label{equation:smoothingcorrectionwitherror}
        q \in \left[\max\left(0, \dfrac{1}{|\mathcal{Y}|} - \dfrac{\gamma}{1 - \gamma}\left( C - b(\epsilon)\right)\right), \dfrac{1}{|\mathcal{Y}|}\right],
    \end{equation}
    where $q$ is the output probability of $v_*$ corresponding to class 0, $C$ is approximately $\left(1 + \dfrac{1}{|\mathcal{Y}|}\right)hp - \dfrac{h+p}{|\mathcal{Y}|}$, $\gamma$ is the propagation strength in Eq.~(\ref{equation:labelpropagation}), and $b(\epsilon) = \left(C + \dfrac{hp}{|\mathcal{Y}|}\right)\epsilon$.
\end{theorem}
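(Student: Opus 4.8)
The plan is to track what one application of Eq.~(\ref{equation:labelpropagation}) does to the score vector of $v_*$ on a per-class basis, reduce ``correction'' to a single scalar inequality, and then identify the stated closed forms for $C$ and $b(\epsilon)$ by an expected-value (case) analysis of the neighbors. Write $m := |\mathcal{Y}|$. Since the graph is $d$-regular, the symmetrically normalized $\tilde A$ (with self-loops) acts as a neighbor average up to the self-term of weight $1/(d+1)$, which I would treat as negligible (the large-$d$ regime); thus after one step the class-$c$ score of $v_*$ is $\gamma\,\bar P_c + (1-\gamma)\,P^t[*,c]$, where $\bar P_c$ denotes the average class-$c$ probability over the $d$ neighbors of $v_*$. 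I define the prediction of $v_*$ to be \emph{corrected} when class $0$ (its true label) becomes the argmax of this post-propagation score.

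First I would reduce this to a scalar condition. By the stated symmetry of the model among the $m-1$ non-zero classes (off-class neighbors spread uniformly over classes $1,\dots,m-1$, and incorrect predictions spread uniformly over wrong classes), the quantities $\bar P_c$ are equal for all $c\neq 0$; call the common value $\bar P_{\neq 0}$. Conservation of probability then gives $\bar P_{\neq 0} = (1-\bar P_0)/(m-1)$ for free. Substituting $P^t[*,0]=q$ and $P^t[*,c]=(1-q)/(m-1)$, the correction condition ``class $0$ beats every other class'' collapses to
\[
\gamma\,(\bar P_0 - \bar P_{\neq 0}) > (1-\gamma)\!\left(\tfrac{1-q}{m-1}-q\right),
\]
and, using $\bar P_0-\bar P_{\neq 0}=(m\bar P_0-1)/(m-1)$ and $\tfrac{1-q}{m-1}-q=(1-qm)/(m-1)$, this rearranges cleanly into
\[
q \;>\; \frac{1}{m} - \frac{\gamma}{1-\gamma}\!\left(\bar P_0 - \frac{1}{m}\right).
\]
The upper endpoint $q<1/m$ is just the hypothesis that $v_*$ is initially misclassified, so it remains to show $\bar P_0 - 1/m = C - b(\epsilon)$.

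The heart of the argument is computing $\bar P_0$. I would split each neighbor along two axes: same-class (prob.\ $h$) vs.\ off-class (prob.\ $1-h$), and correctly (prob.\ $1-\epsilon$) vs.\ incorrectly (prob.\ $\epsilon$) predicted. A correctly predicted same-class neighbor contributes $p$ to class $0$; an incorrect one contributes $(1-p)/(m-1)$; a correct off-class neighbor contributes $(1-p)/(m-1)$; and an incorrect off-class neighbor contributes $p$ only when its (uniformly chosen) wrong label happens to be class $0$, i.e.\ with probability $1/(m-1)$, else $(1-p)/(m-1)$. Assembling these gives $\bar P_0$ as an \emph{affine} function of $\epsilon$: the $\epsilon=0$ part is $hp+(1-h)\frac{1-p}{m-1}$, whose gap to $1/m$ is $C$, and applying $\frac{1}{m-1}\approx\frac1m$ recovers $C\approx(1+\frac1m)hp-\frac{h+p}{m}$; the $\epsilon$-linear coefficient, after the consistent large-$m$ expansions $\frac{1}{m-1}\approx\frac1m$ and $\frac{1}{(m-1)^2}\approx\frac{1}{m^2}(1+\frac2m)$, equals $-(C+hp/m)$, so $\bar P_0 - 1/m = C - b(\epsilon)$ with $b(\epsilon)=(C+hp/m)\epsilon$. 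Plugging this into the displayed inequality and intersecting with $q\ge 0$ (the $\max(0,\cdot)$) yields exactly Eq.~(\ref{equation:smoothingcorrectionwitherror}).

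I expect the main obstacle to be the bookkeeping in the $\bar P_0$ computation rather than any conceptual difficulty: one must correctly model how an error both \emph{removes} class-$0$ mass from a same-class neighbor and \emph{injects} class-$0$ mass into an off-class neighbor, and then carry the $1/m$-order terms through faithfully. Matching the precise form of $b(\epsilon)$ in particular forces the consistent use of the $\frac{1}{(m-1)^2}\approx\frac{1}{m^2}(1+\frac2m)$ expansion, which is exactly why the statement only asserts $C$ \emph{approximately}. The self-loop and finite-$d$ corrections, as well as a formal check that the $\bar P_c$ are genuinely equal for $c\neq 0$, are lower-order and I would dispatch them in a short remark.
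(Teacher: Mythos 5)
Your proposal is correct, and it reaches exactly the paper's inequality, but it is organized differently from the paper's proof in one genuinely useful way. The paper computes \emph{both} entries of the propagated row of $v_*$ explicitly: the class-$0$ entry $\beta_{q,\epsilon}$ and the off-class entry $\beta'_{q,\epsilon}$, the latter requiring a four-term case analysis (self term, same-class neighbors, off-class neighbors previously predicting the class in question, and off-class neighbors predicting other classes), and then imposes $\beta_{q,\epsilon} > \beta'_{q,\epsilon}$. You avoid the $\beta'_{q,\epsilon}$ computation entirely: by class symmetry among the $|\mathcal{Y}|-1$ wrong classes plus row-stochasticity of the propagated probabilities, you get $\bar P_{\neq 0} = (1-\bar P_0)/(|\mathcal{Y}|-1)$ for free, which yields the clean intermediate criterion
\begin{equation*}
q \;>\; \frac{1}{|\mathcal{Y}|} - \frac{\gamma}{1-\gamma}\left(\bar P_0 - \frac{1}{|\mathcal{Y}|}\right),
\end{equation*}
so only the class-$0$ neighbor average needs to be worked out. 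This cuts the bookkeeping roughly in half and makes the structure of the bound transparent (correction happens iff the neighbors' average class-$0$ mass exceeds $1/|\mathcal{Y}|$ by enough to overcome the $(1-\gamma)$-weighted self term); one can verify your shortcut is consistent with the paper by checking that the neighbor parts of the paper's $\beta_{q,\epsilon} + (|\mathcal{Y}|-1)\beta'_{q,\epsilon}$ indeed sum to $1$. Your case analysis for $\bar P_0$ is then term-for-term the paper's computation of $\beta_{q,\epsilon}$, and your identification $\bar P_0 - 1/|\mathcal{Y}| \approx C - b(\epsilon)$ matches the paper's final display (the exact discrepancy is $(1-h)(1-p)/\bigl(|\mathcal{Y}|(|\mathcal{Y}|-1)\bigr)$ at $\epsilon=0$, which both you and the paper discard, you via $1/(|\mathcal{Y}|-1)\approx 1/|\mathcal{Y}|$, the paper via $(|\mathcal{Y}|-2)/(|\mathcal{Y}|-1)\approx 1$). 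Two minor caveats: the paper's proof effectively uses $\tilde A = D^{-1/2} A D^{-1/2}$ \emph{without} self-loops, so on a $d$-regular graph the normalization is exact and no large-$d$ assumption is needed, whereas your reading (self-loops, discard the $1/(d+1)$ self weight) introduces an extra, harmless approximation; and your correction criterion ``class $0$ becomes the argmax'' coincides with the paper's $\beta_{q,\epsilon} > \beta'_{q,\epsilon}$ only because of the symmetry you invoke, so that symmetry check is load-bearing rather than a dispatchable remark.
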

\begin{proof} 
Before proving Theorem~\ref{theorem:smoothingcorrectionwitherror}, we first show preliminary calculations that are needed before we proceed with the main proof. Here, we start with the simplest case. Following~\citep{zhu2020h2gcn}, we calculate the result when one-hot labels are being propagated, which will be used for later calculations. First, without loss of generality, we reorder the label matrix $Y \in \{0,1\}^{|V| \times |\mathcal{Y}|}$ (and also the rows and columns of adjacency matrix $A$) as follows:
\begin{equation}
\Scale[0.8]{
    Y =
    \begin{bmatrix}
        1 & 0 & \cdots & 0 \\
        \vdots & \vdots & \ddots & \vdots \\
        1 & 0 & \cdots & 0 \\
        0 & 1 & \cdots & 0 \\
        \vdots & \vdots & \ddots & \vdots \\
        0 & 1 & \cdots & 0 \\
        \vdots & \vdots &  & \vdots \\
        0 & 0 & \cdots & 1 \\
        \vdots & \vdots & \ddots & \vdots \\
        0 & 0 & \cdots & 1 \\
    \end{bmatrix}.
}
\end{equation}
We then calculate $(A + I)Y$, which we will progressively modify to $\Bar{\Pi}$ during the rest of this section. We take advantage of the neighbor assumptions, which results in:
\begin{equation}
\label{eq:naivelp}
\Scale[0.8]{
    (A + I)Y =
    \begin{bmatrix}
        hd + 1 &  \dfrac{1-h}{|\mathcal{Y}| - 1}d & \cdots & \dfrac{1-h}{|\mathcal{Y}| - 1}d \\
        \vdots & \vdots & \ddots & \vdots \\
        hd + 1 & \dfrac{1-h}{|\mathcal{Y}| - 1}d & \cdots & \dfrac{1-h}{|\mathcal{Y}| - 1}d \\
        \dfrac{1-h}{|\mathcal{Y}| - 1}d & hd + 1 & \cdots & \dfrac{1-h}{|\mathcal{Y}| - 1}d \\
        \vdots & \vdots & \ddots & \vdots \\
        \dfrac{1-h}{|\mathcal{Y}| - 1}d & hd + 1 & \cdots & \dfrac{1-h}{|\mathcal{Y}| - 1}d \\
        \vdots & \vdots &  & \vdots \\
        \dfrac{1-h}{|\mathcal{Y}| - 1}d & \dfrac{1-h}{|\mathcal{Y}| - 1}d & \cdots & hd + 1 \\
        \vdots & \vdots & \ddots & \vdots \\
        \dfrac{1-h}{|\mathcal{Y}| - 1}d & \dfrac{1-h}{|\mathcal{Y}| - 1}d & \cdots & hd + 1 \\
    \end{bmatrix}.
}
\end{equation}

We now start to modify this result of calculating $(A + I)Y$ that more resembles Eq.~(\ref{equation:labelpropagation}), except that we are still propagating one-hot labels. First, we introduce $\gamma$ in Eq.~(\ref{eq:naivelp}) and calculate $(\gamma A + (1 - \gamma)I)Y$: 
\begin{equation}
\Scale[0.8]{
    (\gamma A + (1 - \gamma)I)Y =
    \begin{bmatrix}
        \gamma hd + (1 - \gamma) &  \gamma\dfrac{1-h}{|\mathcal{Y}| - 1}d & \cdots & \gamma\dfrac{1-h}{|\mathcal{Y}| - 1}d \\
        \vdots & \vdots & \ddots & \vdots \\
        \gamma hd + (1 - \gamma) & \gamma\dfrac{1-h}{|\mathcal{Y}| - 1}d & \cdots & \gamma\dfrac{1-h}{|\mathcal{Y}| - 1}d \\
        \gamma\dfrac{1-h}{|\mathcal{Y}| - 1}d & \gamma hd + (1 - \gamma) & \cdots & \gamma\dfrac{1-h}{|\mathcal{Y}| - 1}d \\
        \vdots & \vdots & \ddots & \vdots \\
        \gamma\dfrac{1-h}{|\mathcal{Y}| - 1}d & \gamma hd + (1 - \gamma) & \cdots & \gamma \dfrac{1-h}{|\mathcal{Y}| - 1}d \\
        \vdots & \vdots &  & \vdots \\
        \gamma\dfrac{1-h}{|\mathcal{Y}| - 1}d & \gamma\dfrac{1-h}{|\mathcal{Y}| - 1}d & \cdots & \gamma hd + (1 - \gamma) \\
        \vdots & \vdots & \ddots & \vdots \\
        \gamma\dfrac{1-h}{|\mathcal{Y}| - 1}d & \gamma\dfrac{1-h}{|\mathcal{Y}| - 1}d & \cdots & \gamma hd + (1 - \gamma) \\
    \end{bmatrix}.
}
\end{equation}

Finally, we replace $A$ with $\Tilde{A} = D^{-1/2} A D^{-1/2}$. Since each node has the same degree $d$, each signal is now multiplied with $(1/\sqrt{d})^2 = 1/d$ during propagation, eventually cancelling out the $d$'s:
\begin{equation}
\Scale[0.8]{
    (\gamma \Tilde{A} + (1 - \gamma)I)Y =
    \begin{bmatrix}
        \gamma h + (1 - \gamma) &  \gamma\dfrac{1-h}{|\mathcal{Y}| - 1} & \cdots & \gamma\dfrac{1-h}{|\mathcal{Y}| - 1} \\
        \vdots & \vdots & \ddots & \vdots \\
        \gamma h + (1 - \gamma) & \gamma\dfrac{1-h}{|\mathcal{Y}| - 1} & \cdots & \gamma\dfrac{1-h}{|\mathcal{Y}| - 1} \\
        \gamma\dfrac{1-h}{|\mathcal{Y}| - 1} & \gamma h + (1 - \gamma) & \cdots & \gamma\dfrac{1-h}{|\mathcal{Y}| - 1} \\
        \vdots & \vdots & \ddots & \vdots \\
        \gamma\dfrac{1-h}{|\mathcal{Y}| - 1} & \gamma h + (1 - \gamma) & \cdots & \gamma \dfrac{1-h}{|\mathcal{Y}| - 1} \\
        \vdots & \vdots &  & \vdots \\
        \gamma\dfrac{1-h}{|\mathcal{Y}| - 1} & \gamma\dfrac{1-h}{|\mathcal{Y}| - 1} & \cdots & \gamma h + (1 - \gamma) \\
        \vdots & \vdots & \ddots & \vdots \\
        \gamma\dfrac{1-h}{|\mathcal{Y}| - 1} & \gamma\dfrac{1-h}{|\mathcal{Y}| - 1} & \cdots & \gamma h + (1 - \gamma) \\
    \end{bmatrix}.
}
\end{equation}

Now, we are ready to change $Y$ to a matrix of probability vectors ({\it i.e.}, $P^t$), which finally results in calculating one iteration of $\Bar{\Pi}$. In our analysis, we replace $Y$ with a $P^t \in [0,1]^{|V| \times |\mathcal{Y}|}$, where the correct label is predicted with probability $p$ and the rest of the probabilities $(1-p)$ is distributed uniformly for the rest of the classes:
\begin{equation}
\Scale[0.8]{
    P^t =
    \begin{bmatrix}
        p & \dfrac{1-p}{|\mathcal{Y}| - 1} & \cdots & \dfrac{1-p}{|\mathcal{Y}| - 1} \\
        \vdots & \vdots & \ddots & \vdots \\
        p & \dfrac{1-p}{|\mathcal{Y}| - 1} & \cdots & \dfrac{1-p}{|\mathcal{Y}| - 1} \\
        \dfrac{1-p}{|\mathcal{Y}| - 1} & p & \cdots & \dfrac{1-p}{|\mathcal{Y}| - 1} \\
        \vdots & \vdots & \ddots & \vdots \\
        \dfrac{1-p}{|\mathcal{Y}| - 1} & p & \cdots & \dfrac{1-p}{|\mathcal{Y}| - 1} \\
        \vdots & \vdots &  & \vdots \\
        \dfrac{1-p}{|\mathcal{Y}| - 1} & \dfrac{1-p}{|\mathcal{Y}| - 1} & \cdots & p \\
        \vdots & \vdots & \ddots & \vdots \\
        \dfrac{1-p}{|\mathcal{Y}| - 1} & \dfrac{1-p}{|\mathcal{Y}| - 1} & \cdots & p \\
    \end{bmatrix}.
}
\end{equation}
Assuming $1/|\mathcal{Y}| < p \leq 1$ implies that the teacher GNN has the accuracy of 1 ({\it i.e.}, perfect prediction). Note that setting $p = 1$ reverts $P^t$ to $Y$. Now, calculating for $(\gamma \Tilde{A} + (1 - \gamma)I)P^t$ results in:
\begin{equation*}
\Scale[0.8]{(\gamma \Tilde{A} + (1 - \gamma)I)P^t=
    \begin{bmatrix}
        \beta &  \beta' & \cdots & \beta' \\
        \vdots & \vdots & \ddots & \vdots \\
        \beta & \beta' & \cdots & \beta' \\
        \beta' & \beta & \cdots & \beta' \\
        \vdots & \vdots & \ddots & \vdots \\
        \beta' & \beta & \cdots & \beta' \\
        \vdots & \vdots &  & \vdots \\
        \beta' & \beta' & \cdots & \beta \\
        \vdots & \vdots & \ddots & \vdots \\
        \beta' & \beta' & \cdots & \beta \\
    \end{bmatrix}
    },
\end{equation*}
where
\begin{align}
    \beta &= (1 - \gamma)p + \gamma hp + \gamma\dfrac{1-h}{|\mathcal{Y}| - 1}(1-p) \label{eq:beta_q}\\
    \beta' &= (1 - \gamma)\dfrac{1-p}{|\mathcal{Y}| - 1} + \gamma\dfrac{h}{|\mathcal{Y}| - 1}(1-p) + \gamma\dfrac{1-h}{|\mathcal{Y}| - 1}p + \gamma(1-h)\dfrac{|\mathcal{Y}| - 2}{(|\mathcal{Y}| - 1)^2}(1-p)\label{eq:beta'_q}.
\end{align}


Now, we are ready to prove Theorem~\ref{theorem:smoothingcorrectionwitherror}.

In order to prove the theorem, we basically follow the same calculation steps by recalculating the interval for $v_* = v_1$ while including $\epsilon$, which will require more careful considerations. We start with revisiting Eqs.~(\ref{eq:beta_q}) and (\ref{eq:beta'_q}). In Eq.~(\ref{eq:beta_q}), there are three terms, {\it i.e.}, $\gamma hp$, $(1 - \gamma)q$, and $\gamma(1-p)\dfrac{1-h}{|\mathcal{Y}|-1}$. For ease of notations, we denote the set of nodes with the same labels with $v_1$ as $S$ and the rest of neighbors as $S'$ in the previous setting where we assumed that nodes other than $v_1$ were all correct.

Starting with $(1 - \gamma)q$, this term calculates the effect of self-propagation and therefore remains unchanged in the new setting. The term $\gamma hp$ calculates the influence that is aggregated from nodes in $S$. In the new setting, only $(1-\epsilon)hd$ nodes propagate the probability $p$ (Note that $|S| = hd$), and therefore $\gamma hp$ is changed into $\gamma h\left((1-\epsilon)p + \epsilon \dfrac{1 - p}{|\mathcal{Y}| - 1}\right)$. Next, the term $\gamma(1-h)\dfrac{1-p}{|\mathcal{Y}|-1}$ calculates the influence aggregated from nodes in $S'$, which previously all propagated $\dfrac{1-p}{|\mathcal{Y}|-1}$. When $\epsilon=0$, the number of these nodes is $|S'|=(1-h)d$, and in the new setting, $\epsilon (1-h)d$ has their predictions changed, where $\dfrac{1}{|\mathcal{Y}|-1}$ of them now propagates $p$. Therefore, this term is now changed into $\gamma (1-h)\left( \epsilon \dfrac{1}{|\mathcal{Y}|-1} p + \dfrac{|\mathcal{Y}|-1 - \epsilon}{(|\mathcal{Y}|-1)^2}(1-p)\right)$. In summary, in the new setting, $\beta_q$ becomes 

\begin{align}
    \beta_{q, \epsilon} &= (1-\gamma)q + \gamma h\left((1-\epsilon)p + \epsilon \dfrac{1 - p}{|\mathcal{Y}| - 1}\right) \notag \\
    &+ \gamma (1-h)\left( \epsilon \dfrac{1}{|\mathcal{Y}|-1} p + \dfrac{|\mathcal{Y}|-1 - \epsilon}{(|\mathcal{Y}|-1)^2}(1-p)\right).
\end{align}

In Eq.~(\ref{eq:beta'_q}), there are four terms, {\it i.e.}, $(1 - \gamma)\dfrac{1-q}{|\mathcal{Y}| - 1}$, $\gamma\dfrac{h}{|\mathcal{Y}| - 1}(1-p)$, $\gamma\dfrac{1-h}{|\mathcal{Y}| - 1}p$, and $\gamma(1-h)\dfrac{|\mathcal{Y}| - 2}{(|\mathcal{Y}| - 1)^2}(1-p)$. Using the assumption that the predictions are uniformly distributed among classes for nodes in $S'$; without loss of generality, let us calculate the probability regarding the second class label.

Similarly as in $\beta_q$, the term $(1 - \gamma)\dfrac{1-q}{|\mathcal{Y}| - 1}$ remains unaffected in the new setting as it is the result of self-propagation. The term $\gamma\dfrac{h}{|\mathcal{Y}| - 1}(1-p)$ calculates the influence from nodes that were in $S$. For these $|S| = hd$ nodes, they previously propagated $\dfrac{1-p}{|\mathcal{Y}| - 1}$. In the new setting, $\epsilon \dfrac{1}{|\mathcal{Y}| - 1} hd $ nodes now predict the second class and propagate $p$, while the rest of the $\dfrac{|\mathcal{Y}| - 1 - \epsilon}{|\mathcal{Y}| - 1}hd$ nodes still propagates $\dfrac{1-p}{|\mathcal{Y}| - 1}$. In total, this term is now modified into $\gamma h \left(\dfrac{\epsilon}{|\mathcal{Y}| - 1}p + \dfrac{|\mathcal{Y}| - 1 - \epsilon}{(|\mathcal{Y}| - 1)^2}(1-p)\right)$. The term $\gamma\dfrac{1-h}{|\mathcal{Y}| - 1}p$ calculates the influence from nodes that previously predicted the second class in $S'$. The nodes are now split into two groups with ratio $\epsilon : (1 - \epsilon)$, where the first group now propagates $\dfrac{1-p}{|\mathcal{Y}| - 1}$ and the latter still propagates $p$. In total, this term is now modified into $\gamma \dfrac{1-h}{|\mathcal{Y}| - 1}\left(\epsilon \dfrac{1}{|\mathcal{Y}| - 1}(1-p) + (1-\epsilon)p\right)$. Next, the term $\gamma(1-h)(1-p)\dfrac{|\mathcal{Y}| - 2}{(|\mathcal{Y}| - 1)^2}$ calculates the influence from nodes that previously did not predict as the second class in $S'$. Similarly as before, the nodes are now split into two groups with ratio $\dfrac{1}{|\mathcal{Y}| - 1}\epsilon : \dfrac{|\mathcal{Y}| - 1 - \epsilon}{|\mathcal{Y}| - 1}$, where the first group now (incorrectly) predicts the second class and thus propagates $p$, while the latter still propagates $\dfrac{1-p}{|\mathcal{Y}| - 1}$. In total, this term is now modified into $\gamma(1-h)\dfrac{|\mathcal{Y}| - 2}{|\mathcal{Y}| - 1}\left(\dfrac{\epsilon}{|\mathcal{Y}| - 1}p + \dfrac{|\mathcal{Y}| - 1 - \epsilon}{(|\mathcal{Y}| - 1)^2}(1-p)\right)$. In summary, in the new setting, $\beta'_q$ becomes 
\begin{align}
    \beta'_{q, \epsilon} &= (1 - \gamma)\dfrac{1-q}{|\mathcal{Y}| - 1} + \gamma h \left(\dfrac{\epsilon}{|\mathcal{Y}| - 1}p + \dfrac{|\mathcal{Y}| - 1 - \epsilon}{(|\mathcal{Y}| - 1)^2}(1-p)\right) \nonumber \\ 
    &+ \gamma \dfrac{1-h}{|\mathcal{Y}| - 1}\left(\epsilon \dfrac{1}{|\mathcal{Y}| - 1}(1-p) + (1-\epsilon)p\right) \nonumber\\ 
    &+ \gamma(1-h)\dfrac{|\mathcal{Y}| - 2}{|\mathcal{Y}| - 1}\left(\dfrac{\epsilon}{|\mathcal{Y}| - 1}p + \dfrac{|\mathcal{Y}| - 1 - \epsilon}{(|\mathcal{Y}| - 1)^2}(1-p)\right).
\end{align}

We can verify that both $\beta_{q, \epsilon = 0}$ and $\beta'_{q, \epsilon = 0}$ reduce to $\beta_{q}$ and $\beta'_q$, respectively. Now, in the scenario where the node prediction is corrected after propagation, we need $\beta_{q, \epsilon} > \beta'_{q, \epsilon}$. After calculation with similar approximations when we calculated Eq.~(\ref{equation:qwithgoodteacher}), we arrive at:
\begin{align}
    q &> \dfrac{1}{|\mathcal{Y}|} - \dfrac{\gamma}{1 - \gamma}\left(\left((1 - \epsilon) + \dfrac{1 - 2\epsilon}{|\mathcal{Y}|}\right)hp - (1-\epsilon)\dfrac{h + p}{|\mathcal{Y}|}\right) \notag\\ 
    &= \dfrac{1}{|\mathcal{Y}|} - \dfrac{\gamma}{1 - \gamma}\left(C - \epsilon\left(C + \dfrac{hp}{|\mathcal{Y}|}\right)\right),
\end{align}
which concludes the proof of Theorem~\ref{theorem:smoothingcorrectionwitherror}.
\end{proof}

From Theorem~\ref{theorem:smoothingcorrectionwitherror}, we can see that an increase of the error $\epsilon$ reduces the range of $q$, enabling the prediction of nodes to get corrected, which means that incorrect predictions from the teacher GNN introduce a more unforgiving environment for self-correction. Moreover, it is worth noting that the acceptable amount of error such that corrections via propagation are possible is upper-bounded by
\begin{equation}
    \epsilon < \frac{|\mathcal{Y}|h - 1}{(|\mathcal{Y}| + 1)h - 1},
\end{equation}
which monotonically increases with $h \in (1/|\mathcal{Y}|, 1]$. This implies that stronger homophily of the underlying graph will result in more tolerance of the prediction error from the teacher GNN.

\subsection{Analysis for $\epsilon=0$}

Let us also consider a simpler scenario where the teacher GNN makes an incorrect prediction \textit{only} for a single node $v_*$ ({\it i.e.}, $\epsilon = 0$) with class 0 as its ground truth label by assigning a new output probability vector $P^t[*,:] = [q, \dfrac{1-q}{|\mathcal{Y} - 1|}, \cdots, \dfrac{1-q}{|\mathcal{Y} - 1|}]$ ($0 \leq q < 1/|\mathcal{Y}|$), same as in our previous setting in Theorem~\ref{theorem:smoothingcorrectionwitherror}. In this setting, we establish the following corollary.

\begin{corollary}
\label{corollary:appendix}
Suppose that the teacher make an incorrect prediction only for a single node $v_*$. Using one iteration of propagation in Eq.~(\ref{equation:labelpropagation}), the prediction of node $v_*$ gets corrected if
    \begin{equation}
        q \in \left[\max \left(0, \dfrac{1}{|\mathcal{Y}|} - \dfrac{\gamma}{1 - \gamma} C \right), \dfrac{1}{|\mathcal{Y}|}\right],
    \end{equation}
where $q$ is the output probability of $v_*$ corresponding to class 0 and $C$ is approximately $\left(1 + \dfrac{1}{|\mathcal{Y}|}\right)hp - \dfrac{h+p}{|\mathcal{Y}|}$.
\end{corollary}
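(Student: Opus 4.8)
The plan is to obtain Corollary~\ref{corollary:appendix} as the direct $\epsilon = 0$ specialization of Theorem~\ref{theorem:smoothingcorrectionwitherror}. The essential observation is that the error-dependent correction $b(\epsilon) = \left(C + \dfrac{hp}{|\mathcal{Y}|}\right)\epsilon$ appearing in the interval of Eq.~(\ref{equation:smoothingcorrectionwitherror}) is linear in $\epsilon$ and vanishes at $\epsilon = 0$. Substituting $b(0) = 0$ into Eq.~(\ref{equation:smoothingcorrectionwitherror}) immediately collapses the lower endpoint $\frac{1}{|\mathcal{Y}|} - \frac{\gamma}{1-\gamma}(C - b(\epsilon))$ to $\frac{1}{|\mathcal{Y}|} - \frac{\gamma}{1-\gamma}C$, reproducing the claimed interval verbatim, so no further work is needed once the theorem is granted.

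For a self-contained derivation that does not invoke the full theorem, I would instead work directly from the quantities $\beta$ and $\beta'$ computed in Eqs.~(\ref{eq:beta_q}) and (\ref{eq:beta'_q}), which already describe exactly the $\epsilon = 0$ scenario in which $v_*$ is the sole incorrectly predicted node carrying probability $q$ on class $0$. After one round of propagation by Eq.~(\ref{equation:labelpropagation}), the class-$0$ coordinate of $v_*$ equals $\beta$ (with $p$ replaced by $q$ in the self-propagation term $(1-\gamma)p$), while every competing coordinate equals $\beta'$; hence $v_*$ is relabeled to its true class precisely when $\beta > \beta'$. The plan is then to form the difference $\beta - \beta'$, which is an affine function of $q$, solve the resulting linear inequality for $q$, and apply the same regularity ($d$-regular graph) and homophily approximations used to produce Eq.~(\ref{equation:qwithgoodteacher}) to discard the lower-order terms. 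This yields the lower bound $q > \frac{1}{|\mathcal{Y}|} - \frac{\gamma}{1-\gamma}C$; the upper bound $q < 1/|\mathcal{Y}|$ is simply the standing hypothesis that the teacher misclassified $v_*$, and the outer $\max(0,\cdot)$ enforces the admissibility constraint $q \geq 0$.

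The only genuinely delicate step in the self-contained route is the approximation that turns the exact inequality $\beta > \beta'$ into the clean closed form, where one must verify that the terms dropped under the regularity and homophily assumptions are indeed negligible and do not perturb the leading-order threshold in $q$. Because the corollary is nothing more than the boundary case $\epsilon = 0$ of a theorem already established in full generality, the substitution argument is the cleaner and intended proof, and I expect the self-contained computation to introduce no new difficulty beyond bookkeeping already carried out in the teacher-error analysis above.
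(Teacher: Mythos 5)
Your proposal is correct, and your fallback ``self-contained'' route is exactly the paper's own proof: the paper establishes the corollary by writing the post-propagation row of $v_*$ as $[\beta_q,\beta'_q,\cdots,\beta'_q]$, requiring $\beta_q>\beta'_q$, solving the resulting affine inequality in $q$ (Eq.~(\ref{eq:intermediate})), and applying the approximation $\frac{|\mathcal{Y}|-2}{|\mathcal{Y}|-1}\approx 1$ to reach Eq.~(\ref{equation:qwithgoodteacher}); the endpoint $q<1/|\mathcal{Y}|$ and the $\max(0,\cdot)$ are precisely the admissibility constraints you describe. One small precision for that route: the substitution of $q$ for $p$ must be made in the self-propagation terms of \emph{both} coordinates, i.e.\ $(1-\gamma)p\to(1-\gamma)q$ in $\beta_q$ \emph{and} $(1-\gamma)\frac{1-p}{|\mathcal{Y}|-1}\to(1-\gamma)\frac{1-q}{|\mathcal{Y}|-1}$ in $\beta'_q$, which is why the left side of Eq.~(\ref{eq:intermediate}) reads $(1-\gamma)\bigl(q-\frac{1-q}{|\mathcal{Y}|-1}\bigr)$. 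Your preferred route---specializing Theorem~\ref{theorem:smoothingcorrectionwitherror} via $b(0)=0$---is genuinely different from the paper and carries two caveats. First, the theorem's hypothesis is $\epsilon\in(0,1)$, an open interval excluding $\epsilon=0$, so the corollary is not a formal instance of the theorem; to make the substitution legitimate you must either appeal to the verified reduction $\beta_{q,\epsilon=0}=\beta_q$, $\beta'_{q,\epsilon=0}=\beta'_q$ inside the theorem's proof, or to continuity of the interval endpoints in $\epsilon$. Second, the paper's proof of the theorem itself leans on ``similar approximations'' to those used in deriving Eq.~(\ref{equation:qwithgoodteacher}), i.e.\ the corollary's computation is logically prior in the paper's development, so deriving the corollary from the theorem inverts that dependency and, read against the paper's text, is mildly circular. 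What the specialization buys, once the boundary issue is patched, is economy (no new computation); what the direct route buys is independence from the theorem and agreement with the paper's actual logical order.
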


\begin{proof}
Since we consider the case where $\epsilon=0$ ({\it i.e.}, the teacher GNN provides correct predictions for nodes other than $v^{*}$), the resulting vector for the first row of $(\gamma \Tilde{A} + (1 - \gamma)I)P^t$ can be expressed as:
\begin{equation}
    ((\gamma \Tilde{A} + (1 - \gamma)I)P^t)[1,:] = [\beta_q, \underbrace{\beta_q', \cdots, \beta_q'}_{(|\mathcal{Y}| - 1)}].
\end{equation}

Intuitively, $\beta_q$ represents the result after propagation for the correct class, and $\beta'_q$ represents the rest of the (incorrect) classes. For the incorrect prediction to be corrected after propagation, it requires $\beta_q >\beta'_q$:
\begin{align}
    (1 - \gamma)\left(q - \dfrac{1-q}{|\mathcal{Y}|-1}\right) &> -\gamma hp - \gamma \dfrac{1-h}{|\mathcal{Y}|-1}(1-p) + \gamma\dfrac{h}{|\mathcal{Y}| - 1}(1-p) \nonumber \\
    &+ \gamma\dfrac{1-h}{|\mathcal{Y}| - 1}p + \gamma(1-h)\dfrac{|\mathcal{Y}| - 2}{(|\mathcal{Y}| - 1)^2}(1-p). \label{eq:intermediate}
\end{align}
Calculation of Eq.~(\ref{eq:intermediate}) can directly reveal the condition for the correction scenario. We can directly derive an interval for $q$ by approximating $\dfrac{|\mathcal{Y}|-2}{|\mathcal{Y}|-1} \approx 1$, which further reduces Eq.~(\ref{eq:intermediate}) to
\begin{equation}
\label{equation:qwithgoodteacher}
    q > \dfrac{1}{|\mathcal{Y}|} - \dfrac{\gamma}{1 - \gamma}\left(\left(1 + \dfrac{1}{|\mathcal{Y}|}\right)hp - \dfrac{h + p}{|\mathcal{Y}|}\right).
\end{equation}

Denoting $C = \left(1 + \dfrac{1}{|\mathcal{Y}|}\right)hp - \dfrac{h+p}{|\mathcal{Y}|}$ results in the interval Eq.~(\ref{eq:intermediate}), which concludes the proof of Corollary~\ref{corollary:appendix}.
\end{proof}

\section{Experimental Results on a Larger Dataset}
We perform an additional experiment on the large-scale OGB-arxiv dataset, where the number of nodes, number of edges, number of node features, and number of classes are 169,343, 1,166,243, 128, and 40, respectively. The experiment is performed in the transductive scenario for \textbf{P\&D} and \textbf{P\&D}-fix, since InvKD is computationally impractical to perform on the large dataset.

\begin{table}[h]
\caption{Node classification accuracy (\%) for OGB-arxiv in transductive setting. The columns represent the performance of the teacher GNN model, plain MLP model without distillation, GLNN~\citep{zhang2022glnn}, and two versions of {\bf P\&D}. The performance of the best method is denoted in bold font.}
\label{table:arxivexperiment}
\small
\centering
\begin{tabular}{lcccccc}
\toprule
Dataset  & Teacher GNN & Plain MLP & GLNN & \textbf{P\&D} & \textbf{P\&D}-fix \\
\midrule
OGB-arxiv & 70.64 ± 0.41 & 55.33 ± 1.54 & 63.02 ± 0.41 & \textbf{65.20} ± 0.45 & 65.14 ± 0.35 \\
\bottomrule
\end{tabular}
\end{table}

As shown in Table~\ref{table:arxivexperiment}, experimental results demonstrate that a similar trend is also found for the OGB-arxiv dataset, where \textbf{P\&}D and \textbf{P\&D}-fix achieve gains of 2.18\% and 2.12\% compared to GLNN, respectively.

\section{Experimental Results on an Alternative Teacher GNN}
We perform an additional set of experiments in the transductive setting where APPNP is adopted as the teacher GNN model in addition to GraphSAGE. 

\begin{table}[h]
\caption{Node classification accuracy (\%) for Cora, Citeseer, Pubmed in the transductive setting employing GraphSAGE and APPNP as the teacher GNN. The columns represent the model used for another teacher GNN, datasets, the performance of GLNN~\citep{zhang2022glnn}, InvKD, and two versions of {\bf P\&D}, alongside the performance increase with respect to GLNN in the parenthesis. The case with higher performance increase is denoted as bold font.}
\label{table:appnpexperiment}
\small
\centering
\begin{tabular}{llcccc}
\toprule
Teacher GNN & Dataset  & GLNN & InvKD & \textbf{P\&D} & \textbf{P\&D}-fix \\
\midrule
& Cora & 80.73 & 82.22 (↑\textbf{1.49}) & 82.16 (↑\textbf{1.43}) & 82.29 (↑\textbf{1.56}) \\
SAGE & Citeseer & 71.19 & 74.08 (↑\textbf{2.89}) & 73.38 (↑\textbf{2.19}) & 74.93 (↑\textbf{3.74}) \\
& Pubmed & 76.39 & 77.22 (↑0.83) & 77.88 (↑\textbf{1.49}) & 78.11 (↑\textbf{1.72}) \\
\midrule
& Cora & 82.81 & 83.27 (↑0.46) & 83.35 (↑0.54) & 83.82 (↑1.01) \\
APPNP & Citeseer & 73.02 & 73.55 (↑0.53) & 74.32 (↑0.30) & 74.18 (↑1.16) \\
& Pubmed & 75.92 & 77.24 (↑\textbf{1.32}) & 77.37 (↑1.45) & 77.60 (↑1.68) \\
\bottomrule
\end{tabular}
\end{table}

Table~\ref{table:appnpexperiment} summarizes the experimental results, including the relative performance gain over GLNN between two cases: using GraphSAGE and APPNP as the teacher GNNs. 

From the experimental results, we would like to make the following observations.
\begin{itemize}
    \item The proposed framework still outperforms GLNN when APPNP is used as the teacher GNN.
    \item The gain over GLNN tends to be decreased in comparison with the case where GraphSAGE is used as the teacher GNN. This is because a significant portion of the additional structural information that our framework injects is already learned by APPNP due to the similarity in terms of propagation, which potentially makes GLNN more potent.
\end{itemize}
We expect that developing a more sophisticated propagation method that can combine structural information that cannot be captured by propagation will alleviate this phenomenon, which we aim to tackle in our future work.

\end{document}